\documentclass[sigconf]{acmart}

\usepackage{multirow}
\usepackage{graphicx}
\usepackage{algorithm}
\usepackage{algpseudocode}
\usepackage{lipsum}

\usepackage{amsthm}
\newtheorem{proposition}{Proposition}

\usepackage[most]{tcolorbox}

\newtcolorbox{iterprompt}[1]{
    enhanced,
    breakable,
    colback=white,
    colframe=gray!45,
    colbacktitle=gray!60,
    coltitle=white,
    fonttitle=\bfseries\small,
    title={#1},
    boxrule=0.4pt,
    arc=2pt,
    outer arc=2pt,
    left=8pt,
    right=8pt,
    top=6pt,
    bottom=6pt,
    boxed title style={
        boxrule=0pt,
        arc=2pt
    }
}

\AtBeginDocument{%
  }

\renewcommand\footnotetextcopyrightpermission[1]{}

\begin{document}

% Provisional descriptive title; replace with your final paper title.
% \title{SegAgent: TextGrad-Guided Evaluator Agent for Upper-Body Garment Segmentation}
\title{GarmentSegAgent: Autonomous Garment Segmentation for Web Fashion Images via Textual Self-Refinement}

\author{%
\textbf{
    Jia Chen\textsuperscript{1,2,*},
    Chao Huang\textsuperscript{1,*},
    Xinrong Hu\textsuperscript{1},
    Mingdi Huang\textsuperscript{1},
    Ruyang Zhao\textsuperscript{1},
    Jun Bai\textsuperscript{3,$\dagger$}
    }
}

\affiliation{%
  \institution{%
  \textsuperscript{1} School of Computer Science and Artificial Intelligence,
  Wuhan Textile University, Wuhan, China\\
  \textsuperscript{2} Engineering Research Center of Hubei Province
  for Clothing Information, Wuhan Textile University,
  Wuhan, China\\
    \textsuperscript{3} Australian Institute for Machine Learning (AIML),
    Adelaide University, Adelaide, Australia \\
  }
  \country{\unskip}
}

\begin{abstract}

Web fashion images support applications such as visual
product search and consumer-to-shop matching, yet their
visual heterogeneity makes reliable garment segmentation
challenging. Despite advances in promptable segmentation,
autonomously identifying and correcting garment-specific
errors without human-provided spatial prompts remains
difficult. We propose \textbf{GarmentSegAgent}, an autonomous
framework that integrates semantic planning, spatial
execution, and visual evaluation through iterative
textual self-refinement. At its core, we introduce
\textbf{CAST} (Constrained Attribution-guided Selective
TextGrad), which connects visual error attribution with
targeted textual optimization. CAST identifies instruction
components responsible for segmentation errors,
selectively updates them through textual gradients,
and validates the resulting instructions against
garment-specific constraints. This enables image-specific
error correction at inference time while keeping the
underlying foundation models frozen. Experiments on
DeepFashion2, ModaNet, and Fashionpedia demonstrate
consistent improvements in region overlap and boundary
accuracy, alongside more reliable refinement than
conventional TextGrad. By enabling visual feedback
to guide controlled instruction updates, GarmentSegAgent
offers a promising direction for autonomous garment
understanding, with potential applications in scalable
fashion content analysis, visual retrieval, and
consumer-to-shop matching on the Web.

\end{abstract}

% Insert official ACM CCS concepts here before submission.

\keywords{garment segmentation, agentic segmentation, visual error attribution, textual optimization, web fashion images}

\maketitle

\pagestyle{plain}
\thispagestyle{plain}
\begingroup
\renewcommand{\thefootnote}{\fnsymbol{footnote}}
\setcounter{footnote}{1}
\footnotetext{Jia Chen and Chao Huang contributed equally to this work.}

\setcounter{footnote}{2}
\footnotetext{Corresponding author: Jun Bai
(\texttt{jun.bai@adelaide.edu.au}).}
\endgroup

% Add Introduction and Related Work here when preparing the full paper.
% The Method below belongs to the main text, not the appendix.

\section{Introduction}
\label{sec:introduction}

% Fashion images constitute an important source of visual content on the Web, supporting applications such as visual product search, fine-grained fashion retrieval, and consumer-to-shop matching~\cite{gao2020fashionbert,zhang2026beyond,ge2019deepfashion2}.
% Unlike curated product images, consumer-generated and street-fashion photographs often exhibit complex backgrounds, diverse viewpoints, severe occlusions, and overlapping garments, making reliable garment-level understanding challenging~\cite{ge2019deepfashion2,jia2020fashionpedia}.
% Recent research has increasingly emphasized localized visual representations to capture fine-grained garment attributes rather than relying solely on global image features
% ~\cite{zhang2026beyond}. In this context, accurate pixel-level garment segmentation
% provides a useful basis for isolating clothing instances and distinguishing garment appearance from surrounding visual interference. However, achieving reliable and autonomous segmentation across heterogeneous Web fashion images remains challenging, particularly when garment boundaries are ambiguous or multiple clothing instances coexist.

Fashion images are widespread across the Web
and support applications such as visual product search,
fine-grained fashion retrieval, and consumer-to-shop
matching~\cite{gao2020fashionbert,zhang2026beyond,ge2019deepfashion2}.
Unlike curated product images, consumer-generated and
street-fashion photographs often contain cluttered
backgrounds, diverse viewpoints, occlusions, and
overlapping garments, making individual clothing
instances difficult to localize and delineate
~\cite{ge2019deepfashion2,jia2020fashionpedia}.
Accurate pixel-level garment segmentation helps
isolate clothing instances and preserve fine-grained
spatial details for downstream fashion understanding.
However, reliable segmentation across such heterogeneous
imagery remains challenging, particularly when target
garments must be identified without human-provided
spatial prompts.

% Recent advances in visual segmentation have substantially improved the ability to identify and delineate objects in complex images. Early garment-specific approaches, such as FashionFormer~\cite{xu2022fashionformer}, focused on learning fine-grained
% clothing representations through supervised segmentation. More recently, foundation models such as SAM~\cite{kirillov2023sam} and SAM 2~\cite{ravi2025sam2} have enabled flexible segmentation through spatial prompts, while Grounding DINO~\cite{liu2024groundingdino} and LISA~\cite{lai2024lisa} have extended visual localization and segmentation toward language-guided understanding. Further improvements, such as SAM-REF~\cite{yu2025samref}, enhance segmentation details through image-prompt interaction.
% Despite these advances, existing approaches primarily focus on generating accurate masks from given or inferred prompts, rather than autonomously diagnosing segmentation failures
% and refining the underlying instructions. This limitation becomes particularly evident in heterogeneous Web fashion images, where ambiguous garment instances, occlusions, and overlapping clothing regions may lead to incorrect target selection or incomplete segmentation. Consequently, reliable garment extraction requires not only strong visual segmentation capabilities, but also the ability to autonomously evaluate and correct intermediate predictions.

Visual segmentation has progressed from garment-specific
architectures, such as FashionFormer~\cite{xu2022fashionformer},
to promptable foundation models ~\cite{bommasani2021foundation} including SAM~\cite{kirillov2023sam}
and SAM 2~\cite{ravi2025sam2}.
Grounding DINO~\cite{liu2024groundingdino} and
LISA~\cite{lai2024lisa} further support language-guided
object localization and segmentation, respectively,
while SAM-REF~\cite{yu2025samref} improves segmentation
details through image-prompt interaction.
Despite these advances, generating accurate masks from
spatial or textual prompts does not necessarily address
errors in the instructions guiding segmentation.
In heterogeneous Web fashion images, ambiguous targets,
occlusions, and overlapping garments can lead to incorrect
target selection or incomplete garment coverage.
Reliable autonomous segmentation therefore requires
not only effective mask generation but also the ability
to diagnose intermediate errors and refine the guiding
instructions without human intervention.

\begin{figure*}[t]
    \centering
    \includegraphics[width=\textwidth]
    {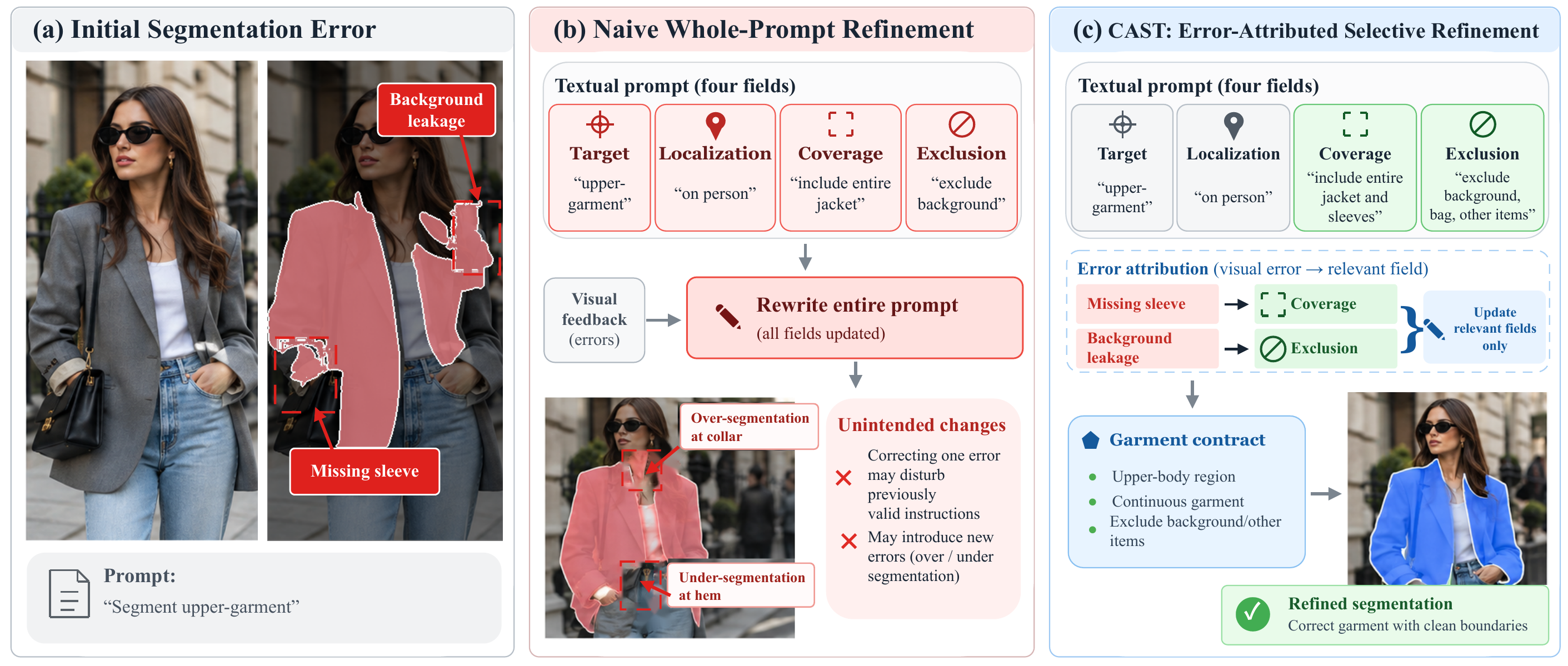}
    
\caption{Motivation for error-attributed selective textual
refinement. (a) Initial segmentation with missing garment
regions and background leakage. (b) Naive whole-prompt
refinement may disrupt otherwise correct instructions.
(c) CAST maps visual errors to relevant instruction fields
for selective, garment-constrained textual refinement.}
\captionsetup{skip=-2pt}

    \label{fig:cast_error_attributed_refinement}
    \vspace{-1em}
\end{figure*}

% To enable autonomous correction, recent research has begun to formulate segmentation as a sequential decision-making process. SegAgent~\cite{zhu2025segagent} learns segmentation actions from human annotation trajectories, while RSAgent~\cite{he2026rsagent} combines multi-turn reasoning and tool execution through supervised fine-tuning and reinforcement learning. Other efforts incorporate reflection and feedback into segmentation workflows, including Refer-Agent~\cite{jiang2026referagent} for collaborative video segmentation and guideline-consistent segmentation~\cite{vats2026guideline} through a training-free
% Worker--Supervisor framework. These developments shift the focus from producing a single mask to reasoning over successive segmentation decisions. A critical question, however, is how visual feedback should translate into effective instruction updates. For instance, a missing sleeve may require stronger coverage instructions, whereas incorrectly included body regions call for more precise exclusion constraints. Rewriting the entire instruction to address such localized errors may inadvertently alter otherwise correct segmentation decisions. This motivates a more targeted approach that associates visual errors with relevant textual instructions while preserving garment-specific constraints.

Recent studies have extended segmentation beyond single-pass
prediction by incorporating sequential reasoning and feedback.
SegAgent~\cite{zhu2025segagent} learns segmentation actions
from human annotation trajectories, while
RSAgent~\cite{he2026rsagent} integrates multi-turn reasoning
and tool execution through supervised fine-tuning and
reinforcement learning. Other approaches explore collaborative
reflection in video segmentation~\cite{jiang2026referagent}
and guideline-consistent segmentation through a training-free
Worker--Supervisor framework~\cite{vats2026guideline}.
These developments highlight the potential of agentic
segmentation, yet a key challenge remains:
how should visual errors guide updates to specific
segmentation instructions? 
% For example, a missing sleeve
% requires improved coverage guidance, whereas incorrectly
% included body regions call for stronger exclusion constraints.
% Rewriting the entire instruction may inadvertently modify
% otherwise correct target-selection or localization requirements.
% This motivates error-aware textual refinement that selectively
% updates relevant instruction components while preserving
% garment-specific constraints.
For example, the missing sleeve and background leakage in
Fig.~\ref{fig:cast_error_attributed_refinement}(a) require
distinct corrections to garment coverage and region exclusion.
This motivates connecting visual error diagnosis with
targeted instruction refinement.

% Textual optimization offers a natural approach to translating such feedback into actionable instruction updates. Early efforts, such as ProTeGi~\cite{pryzant2023automatic}, explore prompt refinement through natural-language gradients, while OPRO~\cite{yang2024opro} leverages language models to iteratively search for more effective instructions. TextGrad~\cite{textgrad} further introduces language-based backpropagation, enabling textual variables in compound AI systems to be optimized through feedback without requiring differentiable model components. For autonomous garment segmentation, however, optimization must account for the dependencies between textual instructions and their resulting spatial predictions. Visual errors may implicate different instruction components, while unrestricted updates risk disrupting previously
% correct segmentation decisions. The key challenge is therefore to determine which textual
% variables should be refined from visual evidence and how to constrain their updates to preserve the garment extraction objective.

Textual optimization provides a flexible means of refining
instructions through feedback without modifying model
parameters~\cite{bai2026fedehr, wang2026doctoragents}. ProTeGi~\cite{pryzant2023automatic} employs
natural-language gradients for prompt refinement, while
OPRO~\cite{yang2024opro} uses language models to iteratively
search for effective instructions. TextGrad~\cite{textgrad}
further enables feedback-driven optimization of textual
variables in compound AI systems through language-based
backpropagation. However, applying general-purpose textual
optimization to garment segmentation requires accounting
for the relationship between visual errors and individual
instruction components. The whole-prompt revision depicted
in Fig.~\ref{fig:cast_error_attributed_refinement}(b)
exemplifies how correcting localized errors may inadvertently
disrupt previously valid segmentation decisions.
Effective refinement therefore requires identifying
error-relevant textual variables while maintaining
garment-specific constraints.

To enable error-specific and constraint-aware refinement,
we propose \textbf{GarmentSegAgent}, an autonomous garment
segmentation framework integrating semantic planning,
spatial execution, and visual evaluation within an
iterative self-refinement process. At its core,
\textbf{CAST} (Constrained Attribution-guided Selective
TextGrad) connects visual segmentation errors with
structured textual optimization. CAST attributes observed
errors to four instruction fields---target, localization,
coverage, and exclusion---and selectively applies
TextGrad updates to the relevant variables. For instance,
CAST directs missing garment regions to coverage refinement
and background leakage to exclusion refinement, while
preserving unrelated instructions under garment-specific
constraints (Fig.~\ref{fig:cast_error_attributed_refinement}(c)).
The refined instructions subsequently guide spatial
prompt generation and segmentation, enabling autonomous,
image-specific refinement without human-provided spatial
prompts or fine-tuning the underlying foundation models.

% \begin{figure*}[t]
%     \centering
%     \includegraphics[width=\textwidth]{pictures/cast_error_attributed_refinement.drawio.pdf}
%     \caption{Illustration of error-attributed selective textual refinement. (a) An initial segmentation contains background leakage and a missing sleeve. (b) Naive whole-prompt refinement updates all instruction fields and may introduce unintended changes. (c) CAST routes each visual error to its relevant textual field, preserves the garment contract, and produces a refined segmentation.}
%     \label{fig:cast_error_attributed_refinement}
% \end{figure*}

% Our main contributions are summarized as follows:
% \begin{itemize}
%     \item We present \textbf{GarmentSegAgent}, an autonomous framework for garment segmentation in heterogeneous Web fashion images. It integrates semantic planning, spatial
%     execution, and visual evaluation into a self-refining workflow without requiring human-provided spatial prompts or foundation-model fine-tuning.

%     \item We develop \textbf{CAST}, a constrained attribution-guided selective textual optimization  mechanism. CAST associates visual segmentation errors with relevant instruction variables, performs selective TextGrad-based updates, and enforces garment-specific constraints to preserve task consistency during refinement.

%     \item We evaluate GarmentSegAgent on multiple fashion  segmentation benchmarks, including DeepFashion2, ModaNet, and Fashionpedia, with comprehensive analyses of
%     segmentation effectiveness, component contributions,  and iterative refinement behavior.

% \end{itemize}

Our main contributions are threefold.
\textbf{(1) Autonomous Garment Segmentation.}
We introduce \textbf{GarmentSegAgent}, a feedback-driven
framework that enables autonomous, iterative garment
segmentation without human-provided spatial prompts
or foundation-model fine-tuning.
\textbf{(2) Error-Attributed Textual Refinement.}
We propose \textbf{CAST}, a task-aware optimization
mechanism that maps visual segmentation errors to
relevant textual variables and performs selective
TextGrad updates under garment-specific constraints.
\textbf{(3) Comprehensive Experimental Validation.}
Experiments across three fashion benchmarks demonstrate
superior region and boundary accuracy, while refinement
analyses and ablations establish the effectiveness
of CAST in improving optimization reliability.

\section{Related Work}
\label{sec:related_work}

\paragraph{\textbf{Agentic Segmentation}}
\label{sec:agentic_segmentation}

Agentic segmentation extends mask prediction through
sequential reasoning, tool interaction, and feedback-driven correction~\cite{yao2023react,wang2024agentsurvey}. SegAgent~\cite{zhu2025segagent} learns
multi-step segmentation policies from human annotation
trajectories, whereas RSAgent~\cite{he2026rsagent}
combines multi-turn tool use with supervised fine-tuning
and reinforcement learning. Other approaches emphasize
agent coordination and iterative evaluation.
Refer-Agent~\cite{jiang2026referagent} introduces
collaborative reasoning and reflection for referring
video segmentation, while guideline-consistent
segmentation~\cite{vats2026guideline} employs a
Worker--Supervisor framework with guideline-based
feedback and adaptive termination. These approaches
advance segmentation through learned action policies,
collaborative reasoning, and iterative mask correction.

\paragraph{\textbf{Textual Optimization}}
\label{sec:textual_optimization}

% Textual optimization improves language-based instructions
% through iterative feedback.Related work on general-purpose language models has also explored
% iterative self-feedback, verbal reflection, and tool-interactive
% self-correction \cite{madaan2023selfrefine,shinn2023reflexion,
% gou2024critic}.ProTeGi~\cite{pryzant2023automatic}
% uses natural-language gradients and beam search for
% prompt refinement, while OPRO~\cite{yang2024opro}
% generates candidate instructions based on previous
% evaluation results. TextGrad~\cite{textgrad}
% extends language-based optimization to compound AI
% systems by propagating feedback across textual variables
% through textual backpropagation. 
Language-based refinement has been explored through
iterative self-feedback, verbal reflection, and
tool-assisted correction
\cite{madaan2023selfrefine,shinn2023reflexion,gou2024critic}.
For prompt optimization, ProTeGi~\cite{pryzant2023automatic}
employs natural-language gradients and beam search,
while OPRO~\cite{yang2024opro} uses language models
to optimize instructions based on previous evaluations.
TextGrad~\cite{textgrad} further extends textual
optimization to compound AI systems through
language-based backpropagation across textual variables.
These methods provide
general mechanisms for optimizing instructions without
differentiating through underlying models. Applying
such optimization to segmentation additionally requires
grounding visual errors in relevant instruction variables
and maintaining task-specific constraints during updates.

\section{Methodology}
\label{sec:method}

\begin{figure*}[t]
    \centering
    \includegraphics[width=0.90\linewidth]
    {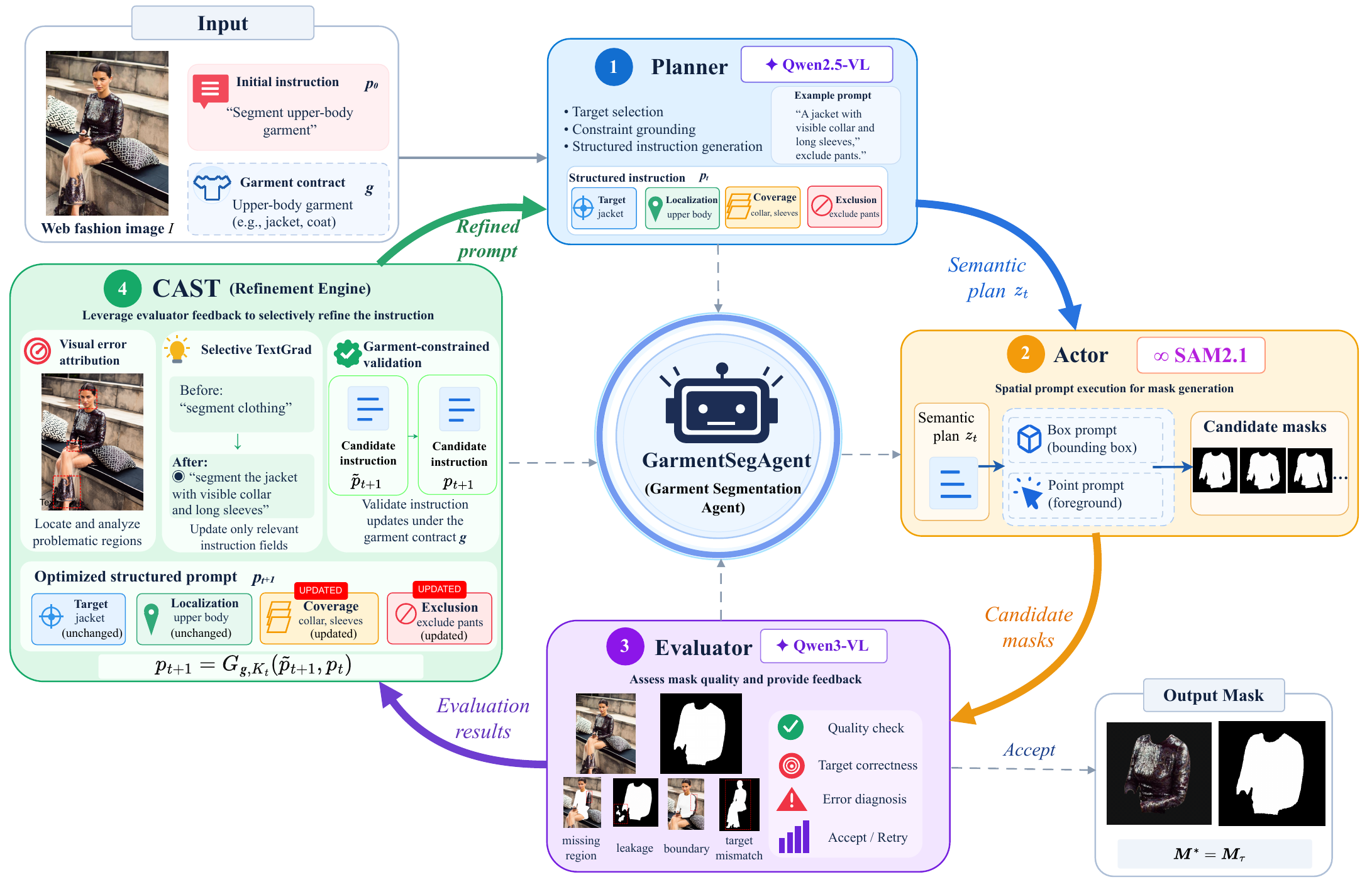}
   \caption{Overview of GarmentSegAgent.
    The Planner generates a garment-aware semantic plan,
    the Actor produces candidate masks, and the Evaluator
    selects predictions and provides visual feedback.
    CAST performs selective, garment-constrained textual
    refinement to guide subsequent segmentation iterations.}
    \captionsetup{skip=3pt}
    \label{fig:framework}
\end{figure*}

\subsection{Problem Formulation and Framework Overview}
\label{sec:problem_formulation}

% We investigate autonomous garment segmentation for heterogeneous Web fashion images. Given an input image
% $I \in \mathbb{R}^{H \times W \times 3}$ and an initial textual instruction $p_0$, our objective is to identify and extract a
% target garment instance as a binary segmentation mask $M^* \in \{0,1\}^{H \times W}$, without requiring human-provided
% spatial prompts. We formulate this task as an iterative decision-making process, where segmentation actions are
% autonomously planned, evaluated, and refined through visual feedback during inference.

We consider autonomous garment segmentation in heterogeneous
Web fashion images. Given an input image
$I \in \mathbb{R}^{H \times W \times 3}$ and an initial
textual instruction $p_0$, the objective is to extract
a target garment instance as a binary segmentation mask
$M^* \in \{0,1\}^{H \times W}$ without human-provided
spatial prompts. We formulate this task as an iterative
process in which segmentation predictions are evaluated
and refined through visual feedback at inference time.

% To maintain task consistency across iterations, we introduce a garment extraction contract $g$ that specifies target
% selection and segmentation constraints. Given an instruction $p$, the Planner $\mathcal{P}$ generates spatial prompts,
% which are executed by the segmentation model $\mathcal{S}$ to produce candidate masks:
% \begin{equation}
%     \mathcal{M}(p) = \mathcal{S}(I,\mathcal{P}(I,p,g)).
% \end{equation}
% An evaluator selects a candidate mask $M(p)$ from $\mathcal{M}(p)$. We formulate textual self-refinement as a
% constrained test-time optimization problem:
% \begin{equation}
%     \max_{p \in \mathcal{C}(g)}
%     \widehat{Q}(I,M(p),g),
% \end{equation}
% where $\widehat{Q}$ denotes the evaluator-estimated segmentation quality, and $\mathcal{C}(g)$ represents the
% set of textual instructions satisfying the garment contract.

To maintain task consistency, we introduce a garment
extraction contract $g$ specifying target-selection
rules and segmentation constraints. Given an instruction
$p$, the Planner $\mathcal{P}$ generates a semantic
segmentation plan, which the Actor $\mathcal{S}$
converts into spatial prompts to produce candidate masks:
\begin{equation}
    \mathcal{M}(p)
    = \mathcal{S}(I,\mathcal{P}(I,p,g)).
\end{equation}
The visual Evaluator selects a mask $M(p)$ from
$\mathcal{M}(p)$ and provides feedback on segmentation
errors. We formulate textual self-refinement as a
constrained test-time optimization problem:
\begin{equation}
    \max_{p \in \mathcal{C}(g)}
    \widehat{Q}(I,M(p),g),
\end{equation}
where $\widehat{Q}$ denotes evaluator-estimated
segmentation quality and $\mathcal{C}(g)$ is the set
of instructions satisfying the garment contract.
This objective guides feedback-driven refinement
under a finite inference budget.

% To address this problem, we propose \textbf{GarmentSegAgent}, an autonomous framework integrating garment-aware planning,
% spatial segmentation, visual evaluation, and textual self-refinement, as illustrated in Fig.~\ref{fig:framework}.
% At its core, we introduce \textbf{CAST} (\textbf{C}onstrained \textbf{A}ttribution-guided
% \textbf{S}elective \textbf{T}extGrad), a task-aware textual optimization mechanism that attributes visual segmentation
% errors to relevant instruction components and selectively updates them through textual gradients. CAST further enforces
% garment-specific constraints to preserve task consistency during refinement. The resulting instructions guide subsequent
% segmentation iterations, enabling autonomous error correction while keeping all foundation models frozen.

To realize this formulation, we develop
\textbf{GarmentSegAgent}, which integrates semantic
planning, spatial execution, visual evaluation,
and CAST-based textual optimization into a closed-loop
framework, as shown in Fig.~\ref{fig:framework}.
CAST attributes visual errors to relevant instruction
components and selectively refines them through
TextGrad under garment-specific constraints.
The validated instructions guide subsequent segmentation
iterations, while all foundation models remain frozen.

% We propose SegAgent, an TextGrad-Guided Evaluator Agent for Upper-Body
% Garment Segmentation in complex RGB images. Unlike conventional segmentation pipelines that rely on fixed prompts or directly optimize segmentation masks, SegAgent formulates garment extraction as a self-refining decision process. The proposed framework enables an intelligent agent to iteratively perceive the visual environment, execute segmentation actions, evaluate the generated results, and improve its future behavior through language-level optimization.

% As illustrated in Fig.~\ref{fig:framework}, SegAgent consists of four collaborative modules:
% (1) Planner, (2) Actor, (3) Evaluator, and (4) TextGrad-based Feedback Optimization.The Planner performs semantic understanding and generates executable segmentation strategies.The Actor converts these strategies into spatial prompts and produces candidate garment masks. The Evaluator examines the segmentation results through visual evidence and provides structured quality feedback. Finally, the Feedback module employs TextGrad to transform visual feedback into textual gradients, enabling automatic refinement of segmentation instructions.

% Different from traditional feedback mechanisms that only provide confidence scores or heuristic corrections, SegAgent introduces a language-level optimization mechanism. By treating segmentation instructions as optimizable variables, TextGrad allows the agent to continuously improve its segmentation policy without updating the parameters of the underlying vision-language and segmentation models.

\subsection{Garment-Aware Planning and Spatial Execution}
\label{sec:planning_execution}

% To support targeted textual refinement, we organize the segmentation instruction into four components:
% $p_t=\{p_t^{tar},p_t^{loc},p_t^{cov},p_t^{exc}\}$,
% corresponding to target selection, spatial localization, garment coverage, and region exclusion, respectively.
% These components serve as independently optimizable textual variables, while the garment extraction contract
% $g$ specifies the task-level constraints that must be preserved throughout refinement.

We represent the segmentation instruction at iteration
$t$ as four structured textual variables:
$p_t=(p_t^{tar},p_t^{loc},p_t^{cov},p_t^{exc})$,
corresponding to target selection, spatial localization,
garment coverage, and region exclusion, respectively.
These variables allow individual instruction components
to be selectively refined while maintaining consistency
with the garment extraction contract $g$.

% Given the input image $I$, instruction $p_t$, and contract $g$, the Planner employs Qwen2.5-VL to identify the target garment and generate a semantic segmentation plan:
% \begin{equation}
%     z_t = \mathcal{P}(I,p_t,g)
%     = (c_t,b_t,\mathcal{R}_t^+,\mathcal{R}_t^-),
% \end{equation}
% where $c_t$ denotes the target garment description, $b_t$ is its estimated bounding box, and
% $\mathcal{R}_t^+$ and $\mathcal{R}_t^-$ represent candidate foreground and exclusion regions, respectively.
% This plan translates textual instructions into spatial guidance for subsequent segmentation.

Given the image $I$, current instruction $p_t$, and
contract $g$, the Planner employs Qwen2.5-VL~\cite{qwen25vl2025} to
generate a garment-aware semantic plan:
\begin{equation}
    z_t = \mathcal{P}(I,p_t,g)
    = (c_t,b_t,\mathcal{R}_t^+,\mathcal{R}_t^-),
\end{equation}
where $c_t$ describes the target garment, $b_t$ denotes
its estimated bounding box, and $\mathcal{R}_t^+$ and
$\mathcal{R}_t^-$ specify candidate foreground and
exclusion regions, respectively.

% The Actor converts the semantic plan into executable spatial prompts and generates candidate masks using
% SAM2.1. Specifically, the bounding box and validated positive and negative points are used to produce
% segmentation candidates:
% \begin{equation}
%     \mathcal{M}_t =
%     \left\{
%     \mathcal{S}(I,b_t,P_{t,k}^+,P_{t,k}^-)
%     \right\}_{k=1}^{K_t},
% \end{equation}
% where $P_{t,k}^+$ and $P_{t,k}^-$ denote the validated positive and negative point sets for the $k$-th prompt
% configuration, and $K_t$ is the number of candidates. The resulting masks are subsequently assessed by the
% visual evaluator to identify segmentation errors and guide textual self-refinement.

The Actor translates this semantic plan into spatial
prompts for SAM 2.1. Using the estimated bounding box
and positive and negative point configurations derived
from the corresponding regions, it generates a set
of candidate masks:
\begin{equation}
    \mathcal{M}_t =
    \left\{
    \mathcal{S}(I,b_t,P_{t,k}^+,P_{t,k}^-)
    \right\}_{k=1}^{B_t},
\end{equation}
where $P_{t,k}^+$ and $P_{t,k}^-$ denote the positive
and negative point sets for the $k$-th prompt
configuration, and $B_t$ is the number of candidate
configurations. The resulting masks are passed to
the Evaluator for candidate selection and visual
error diagnosis.

\subsection{Visual Error Attribution}
\label{sec:visual_error_attribution}

% While garment-aware planning enables autonomous segmentation, the generated masks may still suffer from incorrect target selection, incomplete garment coverage, or inaccurate boundaries. Conventional visual feedback typically provides an overall quality assessment, offering limited guidance on which segmentation decisions should be corrected. To address this limitation, we introduce Visual Error Attribution (VEA),
% which transforms visual segmentation failures into structured, actionable feedback for subsequent textual optimization.

Despite garment-aware planning, the generated masks
may contain target mismatches, incomplete garment
regions, or inaccurate boundaries. A coarse quality
assessment alone provides limited guidance for
targeted correction. We therefore introduce
Visual Error Attribution (VEA) to characterize
segmentation errors as structured visual feedback
for subsequent textual optimization.

% Given the input image $I$, candidate masks $\mathcal{M}_t$, current instruction $p_t$, and garment contract $g$, the visual evaluator examines target consistency, garment coverage, region exclusion, and boundary accuracy. The evaluation process is formulated as
% \begin{equation}
%     (M_t,e_t) = \mathcal{E}(I,\mathcal{M}_t,p_t,g),
% \end{equation}
% where $\mathcal{E}$ denotes the visual evaluator, $M_t$ is the selected candidate mask, and $e_t$ represents its structured error feedback. Specifically, VEA considers
% four categories of segmentation errors:
% \textit{target mismatch}, \textit{missing regions},
% \textit{region leakage}, and \textit{boundary inaccuracies}.
% Rather than relying solely on an overall quality judgment, VEA characterizes each identified error as
% \begin{equation}
%     e_t = \left\{
%     (k_{t,i},r_{t,i},s_{t,i},c_{t,i},a_{t,i})
%     \right\}_{i=1}^{N_t},
% \end{equation}
% where $k_{t,i}$ denotes the error category, $r_{t,i}$ specifies the affected visual region, $s_{t,i}$ indicates error severity, $c_{t,i}$ represents evaluator confidence,
% and $a_{t,i}$ describes the suggested correction.

Given the image $I$, candidate masks $\mathcal{M}_t$,
current instruction $p_t$, and garment contract $g$,
the visual Evaluator selects a candidate mask and
generates error-specific feedback:
\begin{equation}
    (M_t,e_t) = \mathcal{E}(I,\mathcal{M}_t,p_t,g),
\end{equation}
where $M_t \in \mathcal{M}_t$ is the selected mask and
$e_t$ denotes its structured visual feedback.
VEA considers four error categories:
\textit{target mismatch}, \textit{missing regions},
\textit{region leakage}, and \textit{boundary errors}.
Each identified error is represented by
\begin{equation}
    e_t = \left\{
    (k_{t,i},r_{t,i},s_{t,i},c_{t,i},a_{t,i})
    \right\}_{i=1}^{N_t},
\end{equation}
where $N_t$ is the number of identified errors,
$k_{t,i}$ denotes the error category, $r_{t,i}$
specifies the affected visual region, $s_{t,i}$
and $c_{t,i}$ represent error severity and evaluator
confidence, respectively, and $a_{t,i}$ describes
a suggested correction.

% Importantly, VEA identifies segmentation failures directly from visual evidence without accessing ground-truth masks. For example, an omitted sleeve can be attributed to
% incomplete garment coverage, whereas incorrectly retained skin regions indicate insufficient exclusion constraints. Such structured feedback provides explicit correction
% targets rather than unrestricted natural-language critiques. The resulting error signals are subsequently used by CAST to identify relevant instruction components and guide
% selective textual refinement.

VEA infers these errors from the input image and
predicted masks without accessing ground-truth
annotations. For instance, a missing sleeve indicates
insufficient garment coverage, whereas retained skin
regions suggest inadequate exclusion. These structured
diagnostics provide error-specific signals that CAST maps to relevant instruction variables
for selective textual refinement.

\subsection{CAST: Constrained Attribution-Guided Selective TextGrad}
\label{sec:cast}

% \noindent\textbf{Constrained Textual Optimization.}
% TextGrad~\cite{textgrad} enables the optimization of textual variables by propagating language-model-generated feedback. However, directly optimizing an entire segmentation
% instruction may modify previously valid components when correcting localized visual errors. To address this issue, we propose \textbf{CAST}, an attribution-guided textual
% optimization mechanism that selectively updates error-related instruction components while preserving garment-specific constraints.

\noindent\textbf{Constrained Textual Optimization.}
TextGrad~\cite{textgrad} enables feedback-driven
optimization of textual variables, but unrestricted
instruction updates may disrupt previously valid
components when correcting localized segmentation errors.
To address this issue, we introduce \textbf{CAST},
which couples visual error attribution with selective
textual optimization under garment-specific constraints.

% Given the structured instruction 
% $p_t=\{p_t^{(j)}\}_{j\in\mathcal{J}}$,
% where $\mathcal{J}=\{tar,loc,cov,exc\}$, CAST formulates each refinement step as a constrained optimization problem:
% \begin{equation}
% \begin{aligned}
% \max_{p}\quad & \widehat{Q}(I,M(p),g)\\
% \text{s.t.}\quad
% & p\in\mathcal{C}(g),\\
% & p^{(j)}=p_t^{(j)},\quad j\notin\mathcal{K}_t.
% \end{aligned}
% \label{eq:cast_objective}
% \end{equation}
% where $\mathcal{K}_t\subseteq\mathcal{J}$ denotes the instruction components selected for refinement. Unlike unrestricted prompt optimization, this formulation restricts updates to error-relevant variables while maintaining the remaining instruction components.

Let $\mathcal{J}=\{tar,loc,cov,exc\}$ index the four
instruction fields defined in Section~\ref{sec:planning_execution},
and let $\mathcal{K}_t\subseteq\mathcal{J}$ denote
the error-relevant fields selected at iteration $t$.
We formulate the desired refinement as
\begin{equation}
\begin{aligned}
\max_{p}\quad & \widehat{Q}(I,M(p),g)\\
\text{s.t.}\quad
& p\in\mathcal{C}(g),\\
& p^{(j)}=p_t^{(j)},
\quad \forall j\in\mathcal{J}\setminus\mathcal{K}_t.
\end{aligned}
\label{eq:cast_objective}
\end{equation}
This formulation restricts updates to selected
instruction fields while requiring the resulting
instruction to satisfy the garment contract.
CAST operationalizes this objective through
error-to-instruction routing, selective TextGrad
updates, and garment-constrained 
validation.

% \noindent\textbf{Error-to-Instruction Routing.}
% To determine the optimizable variables, CAST maps the structured visual feedback $e_t$ into a set of relevant instruction components:
% \begin{equation}
%     \mathcal{K}_t =
%     \bigcup_{i=1}^{N_t}\mathcal{R}(k_{t,i}),
% \label{eq:error_routing}
% \end{equation}
% where $\mathcal{R}$ denotes the error-to-instruction routing function. Specifically, target mismatches activate target selection and localization instructions;
% missing garment regions activate coverage and localization instructions; region leakage activates exclusion and localization instructions; and boundary inaccuracies
% activate coverage and localization instructions. This mapping establishes an explicit connection between visual segmentation failures and textual optimization
% variables.

\noindent\textbf{Error-to-Instruction Routing.}
CAST identifies the instruction fields requiring refinement
based on the error categories detected by VEA.
Given the structured feedback $e_t$, the active
instruction set is determined by
\begin{equation}
    \mathcal{K}_t =
    \bigcup_{i=1}^{N_t}\mathcal{R}(k_{t,i}),
\label{eq:error_routing}
\end{equation}
where $\mathcal{R}$ maps each error category to a subset
of instruction fields in $\mathcal{J}$. Specifically,
target mismatches activate target and localization;
missing garment regions activate coverage and localization;
region leakage activates exclusion and localization;
and boundary inaccuracies activate coverage and localization.
When multiple errors are detected, their associated fields
are combined, while unrelated instruction components
remain unchanged during refinement.

\noindent\textbf{Selective Textual Gradient Descent.}
% Following TextGrad~\cite{yuksekgonul2025optimizing}, we formulate instruction refinement through textual loss evaluation, language-based backpropagation, and textual gradient descent. Unlike conventional TextGrad optimization with a predefined set of variables,
% CAST dynamically selects the optimizable instruction components according to visual error attribution.
% 
% For each selected component $j\in\mathcal{K}_t$, we collect the associated visual errors as
% $e_t^{(j)}=\{e_{t,i}\mid
% j\in\mathcal{R}(k_{t,i})\}$.
% We then construct a component-specific textual loss:
% \begin{equation}
%     \ell_t^{(j)}
%     = \mathcal{L}_{\mathrm{text}}
%     (p_t^{(j)};e_t^{(j)},p_t,g),
% \label{eq:cast_loss}
% \end{equation}
% where $\mathcal{L}_{\mathrm{text}}$ denotes a language-model-based evaluation operation that assesses the selected instruction against the attributed segmentation errors.
Given the selected instruction fields $\mathcal{K}_t$,
CAST applies TextGrad
to refine only the variables associated with
detected visual errors. For each $j\in\mathcal{K}_t$,
the corresponding error feedback is collected as
$e_t^{(j)}=\{e_{t,i}\mid j\in\mathcal{R}(k_{t,i})\}$.
We construct a component-specific textual loss:
\begin{equation}
    \ell_t^{(j)}
    = \mathcal{L}_{\mathrm{text}}
    (p_t^{(j)};e_t^{(j)},p_t,g),
\label{eq:cast_loss}
\end{equation}
where $\mathcal{L}_{\mathrm{text}}$ produces an
error-conditioned assessment of the selected field
in the context of the full instruction and garment
contract.
% TextGrad subsequently propagates the textual loss through its language-based backward operation:
% \begin{equation}
%     \gamma_t^{(j)}
%     = \mathcal{B}_{\mathrm{TG}}
%     (\ell_t^{(j)};p_t^{(j)}),
% \label{eq:cast_gradient}
% \end{equation}
% where $\mathcal{B}_{\mathrm{TG}}$ denotes the textual backward operator and $\gamma_t^{(j)}$ represents the resulting natural-language gradient.
% Rather than computing numerical derivatives, the backward operation generates corrective
% feedback for improving the selected instruction.
The textual loss is propagated through the
language-based backward operator:
\begin{equation}
    \gamma_t^{(j)}
    = \mathcal{B}_{\mathrm{TG}}
    (\ell_t^{(j)};p_t^{(j)}),
\label{eq:cast_gradient}
\end{equation}
where $\mathcal{B}_{\mathrm{TG}}$ denotes the textual backward operator and $\gamma_t^{(j)}$ denotes the natural-language
gradient identifying potential corrections to the
selected instruction field. Unlike numerical gradients,
this feedback provides semantic guidance for
instruction revision.

% The resulting gradients are applied through selective textual gradient descent:
% \begin{equation}
% \tilde{p}_{t+1}^{(j)} =
% \begin{cases}
% \mathcal{U}_{\mathrm{TGD}}
% (p_t^{(j)},\gamma_t^{(j)}),
% & j\in\mathcal{K}_t,\\
% p_t^{(j)}, & j\notin\mathcal{K}_t,
% \end{cases}
% \label{eq:cast_update}
% \end{equation}
% where $\mathcal{U}_{\mathrm{TGD}}$ represents the TextGrad optimizer step. By dynamically
% restricting the updated variables to $\mathcal{K}_t$, CAST performs error-conditioned
% textual optimization while preserving unselected instruction components.

The resulting gradients drive selective textual updates:
\begin{equation}
\tilde{p}_{t+1}^{(j)} =
\begin{cases}
\mathcal{U}_{\mathrm{TGD}}
(p_t^{(j)},\gamma_t^{(j)}),
& j\in\mathcal{K}_t,\\
p_t^{(j)}, & j\notin\mathcal{K}_t,
\end{cases}
\label{eq:cast_update}
\end{equation}
where $\mathcal{U}_{\mathrm{TGD}}$ denotes the
TextGrad optimizer step. This operation updates
only error-relevant fields while leaving the
remaining components unchanged. The resulting
candidate instruction $\tilde{p}_{t+1}$ is subsequently
validated against the garment contract

\noindent\textbf{Garment-Constrained Update.}
Although selective TextGrad restricts instruction updates
to error-relevant fields, the resulting candidates may
still violate garment-specific requirements. We therefore
introduce a contract-aware validation operator:
\begin{equation}
    p_{t+1} =
    \mathcal{G}_{g,\mathcal{K}_t}
    (\tilde{p}_{t+1},p_t),
\label{eq:cast_constraint}
\end{equation}
where $\mathcal{G}_{g,\mathcal{K}_t}$ checks the candidate
instruction against the garment contract and update
scope. Compliant updates are accepted, while violations
are repaired when feasible or rolled back
to $p_t$. This process aims to maintain target identity,
coverage, and exclusion requirements without
altering unselected instruction 
fields.

Together, visual error attribution, selective TextGrad,
and garment-constrained validation enable CAST to
refine instructions based on localized segmentation
errors. The validated instruction $p_{t+1}$ is returned
to the Planner for the next segmentation iteration,
while all foundation models remain frozen.
Theoretical analysis of update locality and conditional
constraint preservation is provided in
Appendix~\ref{app:theoretical_analysis}.

\subsection{Iterative Decision and Termination}
\label{sec:iterative_decision}

% GarmentSegAgent integrates the proposed CAST optimization into an iterative segmentation process. At each iteration, the Evaluator assesses the selected mask $M_t$ and its
% associated visual error feedback $e_t$ to determine whether further refinement is necessary. The decision is defined as
% \begin{equation}
%     d_t = \mathcal{D}(I,M_t,e_t,g)
%     \in \{\mathrm{accept},\mathrm{retry}\},
% \label{eq:decision}
% \end{equation}
% where $\mathcal{D}$ denotes the evaluator-based decision function. If the predicted mask satisfies the garment extraction requirements, the agent accepts the result.
% Otherwise, CAST performs error-attributed, selective textual optimization under garment-specific constraints.

At each iteration, the Evaluator determines whether
the selected mask $M_t$ requires further refinement
based on its visual error feedback $e_t$ and the
garment contract $g$:
\begin{equation}
    d_t = \mathcal{D}(I,M_t,e_t,g)
    \in \{\mathrm{accept},\mathrm{retry}\},
\label{eq:decision}
\end{equation}
where $\mathcal{D}$ denotes the evaluator-based
decision function. A mask is accepted when the
Evaluator judges it consistent with the garment
extraction requirements. Otherwise, CAST generates
a validated instruction $p_{t+1}$, which guides
the Planner and Actor in producing new segmentation
candidates. The complete inference procedure is
summarized in Algorithm~\ref{alg:garmentsegagent}.

% The refined instruction $p_{t+1}$ is subsequently returned to the Planner to generate updated spatial prompts, which guide the Actor in producing new segmentation candidates. This process establishes a closed-loop interaction between visual assessment and
% textual optimization, allowing GarmentSegAgent to progressively refine its segmentation decisions without human intervention or model parameter updates. The overall inference procedure of GarmentSegAgent
% is summarized in Algorithm~\ref{alg:garmentsegagent}.

% To ensure bounded inference, we impose a maximum of $T$ segmentation iterations. The process terminates when the Evaluator accepts a mask or the iteration budget is exhausted. Formally, the termination iteration $\tau$ and final output are defined as
% \begin{equation}
% \begin{aligned}
%     \tau &= \min\left(
%     \{t \mid d_t=\mathrm{accept}\}
%     \cup \{T-1\}\right),\\
%     M^* &= M_{\tau},
% \end{aligned}
% \label{eq:termination}
% \end{equation}
% where $t\in\{0,\ldots,T-1\}$. Throughout inference, all foundation models remain
% frozen, and the textual optimization process is performed independently for each input image.

To bound inference cost, we allow at most $T$
segmentation iterations. The process terminates
upon acceptance or when the iteration budget is
exhausted. The stopping iteration $\tau$ and final
segmentation mask are defined as
\begin{equation}
\begin{aligned}
    \tau &= \min\left(
    \{t \mid d_t=\mathrm{accept}\}
    \cup \{T-1\}\right),\\
    M^* &= M_{\tau},
\end{aligned}
\label{eq:termination}
\end{equation}
where $t\in\{0,\ldots,T-1\}$. If no mask is accepted,
the final candidate is returned. Each image is
processed independently, with all foundation
models remaining frozen.

% \begin{algorithm}[t]
% \caption{GarmentSegAgent with CAST Optimization}
% \label{alg:garmentsegagent}
% \begin{algorithmic}[1]
% \Require Image $I$, initial instruction $p_0$,
%          garment contract $g$, maximum iterations $T$
% \Ensure Garment segmentation mask $M^*$

% \For{$t=0,\ldots,T-1$}
%     \State $z_t \gets \mathcal{P}(I,p_t,g)$
%     \State $\mathcal{M}_t \gets \mathcal{S}(I,z_t)$
%     \State $(M_t,e_t) \gets
%            \mathcal{E}(I,\mathcal{M}_t,p_t,g)$
%     \State $d_t \gets \mathcal{D}(I,M_t,e_t,g)$

%     \If{$d_t=\mathrm{accept}$ \textbf{or} $t=T-1$}
%         \State \Return $M_t$
%     \EndIf

%     \State $\mathcal{K}_t \gets
%            \bigcup_{i=1}^{N_t}\mathcal{R}(k_{t,i})$

%     \State $\tilde{p}_{t+1} \gets
%            \mathrm{SelectiveTGD}(p_t,e_t,\mathcal{K}_t,g)$

%     \State $p_{t+1} \gets
%            \mathcal{G}_{g,\mathcal{K}_t}
%            (\tilde{p}_{t+1},p_t)$
% \EndFor
% \end{algorithmic}
% \end{algorithm}

\begin{algorithm}[t]
\caption{GarmentSegAgent with CAST Optimization}
\label{alg:garmentsegagent}
\begin{algorithmic}[1]
\Require Image $I$, initial instruction $p_0$,
         garment contract $g$, iteration budget $T$
\Ensure Segmentation mask $M^*$

\For{$t=0,\ldots,T-1$}
    \State $z_t \gets \mathcal{P}(I,p_t,g)$
    \State $\mathcal{M}_t \gets \mathcal{S}(I,z_t)$
    \State $(M_t,e_t) \gets
           \mathcal{E}(I,\mathcal{M}_t,p_t,g)$
    \State $d_t \gets \mathcal{D}(I,M_t,e_t,g)$

    \If{$d_t=\mathrm{accept}$ \textbf{or} $t=T-1$}
        \State \Return $M_t$
    \EndIf

    \State $\mathcal{K}_t \gets
           \bigcup_{i=1}^{N_t}\mathcal{R}(k_{t,i})$
    \State $\tilde{p}_{t+1} \gets p_t$

    \For{$j\in\mathcal{K}_t$}
        \State $e_t^{(j)} \gets
               \{e_{t,i}\mid j\in\mathcal{R}(k_{t,i})\}$
        \State $\ell_t^{(j)} \gets
               \mathcal{L}_{\mathrm{text}}
               (p_t^{(j)};e_t^{(j)},p_t,g)$
        \State $\gamma_t^{(j)} \gets
               \mathcal{B}_{\mathrm{TG}}
               (\ell_t^{(j)};p_t^{(j)})$
        \State $\tilde{p}_{t+1}^{(j)} \gets
               \mathcal{U}_{\mathrm{TGD}}
               (p_t^{(j)},\gamma_t^{(j)})$
    \EndFor

    \State $p_{t+1} \gets
           \mathcal{G}_{g,\mathcal{K}_t}
           (\tilde{p}_{t+1},p_t)$
\EndFor
\end{algorithmic}
\end{algorithm}

% \subsection{Iterative Decision and Termination}

% At each iteration, SegAgent receives the refined instruction, regenerates
% spatial prompts, and produces updated segmentation candidates.

% The decision process is defined as:

% \begin{equation}
% d_t\in\{\mathrm{accept},\mathrm{retry}\}.
% \end{equation}

% If the Evaluator confirms that the current segmentation satisfies the
% garment extraction requirements, the process terminates.
% Otherwise, the feedback module generates an improved instruction and initiates
% another segmentation iteration.

% The final output mask is obtained from the accepted segmentation result:

% \begin{equation}
% M^{*}=M_{\tau},
% \end{equation}

% where $\tau$ denotes the final executed iteration.

% Throughout inference, all foundation models remain frozen.
% Only instructions, spatial prompts, and intermediate execution states are
% updated during the agentic refinement process.

% Add Experiments, Discussion, and Conclusion here.
% No experimental results are fabricated in this Method-only draft.

% Enable these lines after adding your actual bibliography file.
% \bibliographystyle{ACM-Reference-Format}
% \bibliography{references}

% Add optional supplementary material after \appendix.
\section{Experiments}
\label{sec:experiments}

\subsection{Experimental Setup} \label{sec:experimental_setup}

% \subsubsection{Datasets and Evaluation Protocol}

% We evaluate SegAgent on challenging upper-body garment segmentation
% scenarios collected from real-world fashion images.

% Different from conventional garment segmentation benchmarks that mainly
% contain clean product images, our evaluation focuses on complex RGB images
% with diverse backgrounds, human-object interactions, and visual
% interference.

% We construct four evaluation cohorts:

% \begin{itemize}

% \item
% \textbf{DeepFashion2 upper-garment segmentation:}
% A large-scale fashion benchmark containing diverse clothing categories
% and real-world human wearing scenarios.

% \item
% \textbf{ModaNet upper-garment segmentation:}
% A street-fashion dataset containing complex backgrounds and large
% appearance variations.

% \item
% \textbf{Fashionpedia upper-garment verification:}
% A fine-grained fashion benchmark used for analyzing
% TextGrad-based refinement behaviors.

% \item
% \textbf{COCO person segmentation:}
% A general object segmentation benchmark used to evaluate the
% generalization ability of the proposed agentic framework.

% \end{itemize}

% For each image, SegAgent predicts a binary foreground mask.
% All predictions are evaluated offline against the corresponding
% ground-truth annotations.

% During inference, no ground-truth mask, category label,
% or evaluation metric is accessible to the Planner,
% Actor, Evaluator, or refinement module.

\subsubsection{Datasets and Evaluation Protocol}

% We evaluate GarmentSegAgent on three established fashion
% segmentation benchmarks: DeepFashion2~\cite{ge2019deepfashion2},
% ModaNet~\cite{zheng2018modanet}, and
% Fashionpedia~\cite{jia2020fashionpedia}. DeepFashion2 contains
% both consumer and commercial fashion images with substantial
% variations in garment appearance, viewpoints, and occlusions.
% ModaNet comprises street-fashion images featuring cluttered
% backgrounds and diverse clothing configurations, whereas
% Fashionpedia provides fine-grained garment annotations
% covering a wide range of clothing categories and attributes.
% Together, these benchmarks enable evaluation across
% heterogeneous fashion imagery and garment structures.

We evaluate GarmentSegAgent on three fashion segmentation
benchmarks: DeepFashion2~\cite{ge2019deepfashion2},
ModaNet~\cite{zheng2018modanet}, and
Fashionpedia~\cite{jia2020fashionpedia}.
DeepFashion2 includes consumer and commercial images
with diverse garment appearances, viewpoints, and
occlusions. ModaNet features street-fashion imagery
with cluttered backgrounds and varied clothing
configurations, while Fashionpedia provides fine-grained
annotations across garment categories and attributes.
Together, these datasets cover diverse visual conditions
and garment characteristics.

% We adopt a unified upper-garment segmentation setting,
% using 500 evaluation images from each benchmark. For each
% image, the task is to extract a target upper-garment instance
% as a binary segmentation mask. When multiple garments are
% present, a consistent target-selection and annotation-matching
% protocol is applied across the evaluated methods, with
% dataset-specific details provided in Appendix~\ref{app:implementation}.
% All images are processed independently without human-provided
% spatial prompts or access to ground-truth masks during inference.
% The Planner, Actor, Evaluator, and CAST rely exclusively on
% the input image and intermediate visual feedback, while
% ground-truth annotations and evaluation metrics are used
% only for offline assessment. 
% Additional details on target selection and annotation
% matching are provided in Appendix~\ref{app:implementation}.

We adopt a unified upper-garment segmentation setting,
evaluating 500 images from each benchmark. The objective
is to extract a target upper-garment instance as a binary
mask. For images containing multiple garments, a
consistent target-selection and annotation-matching
protocol determines the evaluation target, as detailed
in Appendix~\ref{app:implementation}.
GarmentSegAgent processes each image independently
without human-provided spatial prompts or access to
ground-truth annotations. Its refinement relies solely
on the input image and intermediate visual feedback,
with ground-truth masks and evaluation metrics reserved
for offline assessment.

% \subsubsection{Baselines}

% We compare GarmentSegAgent against representative methods
% spanning three segmentation paradigms: spatial prompting,
% vision-language segmentation, and reasoning-based segmentation.
% The first group includes SAM 2~\cite{ravi2025sam2} with box
% prompts and Grounding DINO+SAM 2~\cite{liu2024groundingdino},
% which combines object detection with promptable segmentation.
% The second group comprises READ~\cite{qian2025reasoning},
% SEEM~\cite{zou2023segment}, and LISA~\cite{lai2024lisa},
% which incorporate language-guided visual understanding
% into segmentation. We further consider SegZero+SAM 2
% ~\cite{liu2025seg} and SegAgent~\cite{zhu2025segagent}
% as reasoning-based and agentic segmentation baselines,
% respectively. All methods are evaluated on the same image
% subsets and target garment definitions. Given their
% different prompting requirements, we distinguish
% externally supplied spatial prompts from automatically
% generated ones when interpreting the results.

\subsubsection{Baselines}

We compare GarmentSegAgent with representative baselines
across spatial prompting, vision-language segmentation,
and reasoning-based or agentic segmentation.
Spatial prompting methods include SAM 2~\cite{ravi2025sam2}
with box prompts and Grounding DINO+SAM 2
~\cite{liu2024groundingdino}, which generates spatial
prompts through object detection. Vision-language
segmentation methods include READ~\cite{qian2025reasoning},
SEEM~\cite{zou2023segment}, and LISA~\cite{lai2024lisa}.
We further include SegZero+SAM 2~\cite{liu2025seg}
and SegAgent~\cite{zhu2025segagent} as reasoning-based
and agentic segmentation baselines, respectively.
All methods are evaluated on identical image subsets
and target garment definitions. We distinguish
externally supplied spatial prompts from automatically
generated prompts to account for differences in
prompting requirements.

% \subsubsection{Implementation Details}

% GarmentSegAgent employs Qwen2.5-VL as the Planner,
% SAM 2.1 as the segmentation Actor, and Qwen3-VL as
% the visual Evaluator. The Planner generates garment-aware
% spatial prompts, including bounding boxes and foreground
% and background points, which are executed by SAM 2.1
% to produce candidate masks. The Evaluator assesses these
% predictions and provides structured visual error feedback.
% For textual refinement, CAST organizes segmentation
% instructions into four components: target, localization,
% coverage, and exclusion. Based on the attributed errors,
% TextGrad updates only the selected instruction components,
% while garment-contract validation checks the proposed
% updates for task consistency

% All foundation models remain frozen throughout inference.
% Each image is processed independently, with textual
% instructions and corresponding spatial prompts refined
% at test time without parameter updates or cross-image
% state sharing. The maximum number of segmentation
% iterations is set to five, with early termination
% when the Evaluator accepts the predicted mask.

\subsubsection{Implementation Details}

GarmentSegAgent employs Qwen2.5-VL as the Planner,
SAM 2.1 as the segmentation Actor, and Qwen3-VL~\cite{qwen3vl2025}
as the visual Evaluator. The Planner generates a
garment-aware semantic plan, which the Actor translates
into bounding-box and positive/negative point prompts
for candidate mask generation. The Evaluator selects
a mask and provides structured visual feedback to CAST,
which selectively refines the target, localization,
coverage, and exclusion instruction fields under
garment-specific constraints. All foundation models
remain frozen, and each image is processed independently
without cross-image state sharing. We allow at most
five segmentation iterations, including the initial
prediction, with early termination upon evaluator
acceptance.

\subsubsection{Evaluation Metrics}

We assess segmentation quality using mean Intersection over
Union (mIoU), Dice coefficient~\cite{milletari2016vnet}, Boundary IoU (BIoU)~\cite{cheng2021boundary},
precision, and recall. mIoU and Dice quantify region-level
agreement between predicted and ground-truth garment masks,
while BIoU measures boundary alignment, particularly for
fine garment structures and irregular contours. Precision
and recall further characterize foreground leakage and
missing garment regions, respectively. To evaluate textual
self-refinement, we additionally report IoU improvement
from the initial prediction, improvement and degradation
rates, and inference costs in terms of refinement steps,
language-model calls, and execution time.

\begin{table*}[t]
\centering
\caption{Quantitative comparison of upper-garment segmentation
on DeepFashion2, ModaNet, and Fashionpedia.
The best score for each metric within each dataset
is highlighted in bold. }
\label{tab:garment_segmentation}

\small
\setlength{\tabcolsep}{3.5pt}
\renewcommand{\arraystretch}{0.7}

\begin{tabular*}{\textwidth}{
@{\extracolsep{\fill}}clcccccc@{}
}
\toprule
Dataset & Method & Source
& mIoU$\uparrow$
& Dice$\uparrow$
& Boundary IoU$\uparrow$
& Precision$\uparrow$
& Recall$\uparrow$ \\
\midrule

\multirow{8}{*}{\textbf{DeepFashion2}}
& SEEM
& NeurIPS'23
& 0.4587 & 0.5619 & 0.2372 & 0.4777 & 0.7862 \\
& LISA
& CVPR'24
& 0.6913 & 0.7761 & 0.4688 & 0.7340
& \textbf{0.8828} \\
& SAM 2 (box-only)
& ICLR'25
& 0.6328 & 0.7047 & 0.4124 & 0.7130 & 0.7391 \\
& Grounding DINO + SAM 2
& ECCV'24 / ICLR'25
& 0.2689 & 0.2889 & 0.2122 & 0.3129 & 0.2759 \\
& READ
& CVPR'25
& 0.6067 & 0.6866 & 0.4008 & 0.7728 & 0.6761 \\
& SegAgent
& CVPR'25
& 0.4638 & 0.5675 & 0.2477 & 0.5871 & 0.7178 \\
& SegZero + SAM 2
& arXiv'25 / ICLR'25
& 0.7959 & 0.8629 & 0.6047 & 0.8877 & 0.8765 \\
\cmidrule(l){2-8}
& \textbf{GarmentSegAgent (Ours)}
& --
& \textbf{0.8124} & \textbf{0.8736}
& \textbf{0.6318} & \textbf{0.9156} & 0.8741 \\

\midrule

\multirow{8}{*}{\textbf{ModaNet}}
& SEEM
& NeurIPS'23
& 0.2624 & 0.4009 & 0.1391 & 0.2645
& \textbf{0.9699} \\
& LISA
& CVPR'24
& 0.4538 & 0.5910 & 0.3380 & 0.4634 & 0.9340 \\
& SAM 2 (box-only)
& ICLR'25
& 0.5940 & 0.6765 & 0.5020 & 0.6410 & 0.7590 \\
& Grounding DINO + SAM 2
& ECCV'24 / ICLR'25
& 0.1304 & 0.1397 & 0.1146 & 0.1433 & 0.1388 \\
& READ
& CVPR'25
& 0.5143 & 0.6292 & 0.3906 & 0.5400 & 0.8138 \\
& SegAgent
& CVPR'25
& 0.2996 & 0.4428 & 0.1812 & 0.3122 & 0.9549 \\
& SegZero + SAM 2
& arXiv'25 / ICLR'25
& 0.6683 & 0.7337 & 0.5814 & 0.7105 & 0.8021 \\
\cmidrule(l){2-8}
& \textbf{GarmentSegAgent (Ours)}
& --
& \textbf{0.6733} & \textbf{0.7476}
& \textbf{0.6027} & \textbf{0.7220} & 0.8301 \\

\midrule

\multirow{8}{*}{\textbf{Fashionpedia}}
& SEEM
& NeurIPS'23
& 0.3043 & 0.4376 & 0.1685 & 0.3059 & 0.9078 \\
& LISA
& CVPR'24
& 0.5326 & 0.6545 & 0.4086 & 0.5456 & 0.9281 \\
& SAM 2 (box-only)
& ICLR'25
& 0.6989 & 0.7627 & 0.6180 & 0.7312 & 0.8642 \\
& Grounding DINO + SAM 2
& ECCV'24 / ICLR'25
& 0.2191 & 0.2291 & 0.2012 & 0.2299 & 0.2328 \\
& READ
& CVPR'25
& 0.4496 & 0.5412 & 0.3554 & 0.5089 & 0.6350 \\
& SegAgent
& CVPR'25
& 0.3558 & 0.5034 & 0.2063 & 0.3613 & 0.9158 \\
& SegZero + SAM 2
& arXiv'25 / ICLR'25
& 0.7426 & 0.8158 & 0.6662 & 0.7706 & 0.9332 \\
\cmidrule(l){2-8}
& \textbf{GarmentSegAgent (Ours)}
& --
& \textbf{0.7685} & \textbf{0.8375}
& \textbf{0.6950} & \textbf{0.7940}
& \textbf{0.9410} \\

\bottomrule
\end{tabular*}
\end{table*}

\subsection{Main Results}
\label{sec:main_results}

\begin{table*}[t]
\centering
\caption{Comparison of textual refinement strategies on
three fashion segmentation benchmarks. BIoU denotes
Boundary IoU. All iterative methods share the same
foundation models and maximum refinement budget.}
\label{tab:optimization_comparison}

\small
\setlength{\tabcolsep}{5pt}
\renewcommand{\arraystretch}{0.7}

\begin{tabular*}{\textwidth}{
@{\extracolsep{\fill}}lcccccc@{}
}
\toprule
& \multicolumn{2}{c}{DeepFashion2}
& \multicolumn{2}{c}{ModaNet}
& \multicolumn{2}{c}{Fashionpedia} \\
\cmidrule(lr){2-3}
\cmidrule(lr){4-5}
\cmidrule(lr){6-7}

Method
& mIoU$\uparrow$ & BIoU$\uparrow$
& mIoU$\uparrow$ & BIoU$\uparrow$
& mIoU$\uparrow$ & BIoU$\uparrow$ \\
\midrule

One-pass
& 0.6961 & 0.6148
& 0.6145 & 0.5369
& 0.6982 & 0.6080 \\

\midrule

Rule-based Refinement
& 0.7014 & 0.6179
& 0.6399 & 0.5534
& 0.7080 & 0.6079 \\

Vanilla TextGrad
& 0.7568 & 0.6242
& 0.6001 & 0.5221
& 0.7235 & 0.6304 \\

All-field TextGrad
& 0.7491 & 0.6218
& 0.6283 & 0.5392
& 0.7128 & 0.6153 \\

Selective TextGrad
& 0.7836 & 0.6287
& 0.6454 & 0.5558
& 0.7339 & 0.6612 \\

\midrule

\textbf{Full CAST (Ours)}
& \textbf{0.8124} & \textbf{0.6318}
& \textbf{0.6733} & \textbf{0.6027}
& \textbf{0.7685} & \textbf{0.6950} \\

\bottomrule
\end{tabular*}
\end{table*}

% We compare GarmentSegAgent with representative segmentation
% methods across three fashion benchmarks to assess its
% effectiveness under diverse visual conditions.
% The evaluation considers region overlap, boundary accuracy,
% and foreground prediction quality, with quantitative
% results reported in Table~\ref{tab:garment_segmentation}. 
% Additional experiments examine the transferability of
% the agentic framework to person segmentation and
% the consistency of CAST across different fashion image
% sources (Appendix~\ref{app:additional_results}).

Table~\ref{tab:garment_segmentation} compares
GarmentSegAgent with representative baselines on
DeepFashion2, ModaNet, and Fashionpedia in terms
of region overlap, boundary accuracy, and foreground
prediction quality. Additional experiments evaluate
GarmentSegAgent on COCO~\cite{lin2014coco}
person segmentation and examine CAST performance
across shop and consumer images in DeepFashion2
(Appendix~\ref{app:additional_results}~Table~\ref{tab:app_coco}, Fig.~\ref{fig:app_coco_qualitative}, and Table~\ref{tab:app_domains}).

\noindent\textbf{Performance on DeepFashion2.}
GarmentSegAgent achieves an mIoU of 0.8124,
outperforming the strongest baseline, SegZero+SAM 2,
by 1.65 percentage points. Boundary IoU also improves
from 0.6047 to 0.6318, indicating better preservation
of garment boundaries. Moreover, GarmentSegAgent
achieves the highest precision (0.9156), reflecting
reduced foreground leakage. Although LISA attains
slightly higher recall, its considerably lower precision
and mIoU suggest a less favorable balance between
garment coverage and background exclusion.

% \noindent\textbf{Performance on ModaNet.}
% The performance difference is more modest on ModaNet,
% where GarmentSegAgent improves mIoU from 0.6683 to
% 0.6733 over SegZero+SAM 2. A more pronounced gain
% appears in Boundary IoU, which increases from 0.5814
% to 0.6027, indicating better alignment with garment
% contours in complex street-fashion scenes. Interestingly,
% SEEM and SegAgent achieve considerably higher recall
% but much lower precision, suggesting a tendency to
% include irrelevant foreground regions. GarmentSegAgent
% maintains a more balanced trade-off, achieving a
% precision of 0.7220 and recall of 0.8301.

\noindent\textbf{Performance on ModaNet.}
On ModaNet, the performance advantage is more modest
in region overlap but more evident in boundary accuracy.
Compared with SegZero+SAM 2, GarmentSegAgent improves
mIoU from 0.6683 to 0.6733, while Boundary IoU increases
from 0.5814 to 0.6027, suggesting better delineation
of garment contours in street-fashion scenes.
Although SEEM and SegAgent achieve higher recall,
their substantially lower precision indicates greater
foreground leakage. In contrast, GarmentSegAgent
achieves a precision of 0.7220 and recall of 0.8301,
outperforming SegZero+SAM 2 on both metrics and
maintaining a better balance between garment coverage
and background exclusion.

% \noindent\textbf{Performance on Fashionpedia.}
% Fashionpedia exhibits the most consistent gains across
% evaluation metrics. GarmentSegAgent leads all competing
% methods in mIoU, Dice, Boundary IoU, precision, and
% recall, reaching an mIoU of 0.7685 and a Boundary IoU
% of 0.6950. Relative to SegZero+SAM 2, these represent
% improvements of 2.59 and 2.88 percentage points,
% respectively. The simultaneous gains in precision
% and recall suggest that improved garment coverage
% does not come at the expense of increased foreground
% leakage, supporting more accurate extraction under
% fine-grained garment annotations.

\noindent\textbf{Performance on Fashionpedia.}
Fashionpedia shows the most consistent performance
gains, with GarmentSegAgent outperforming all baselines
across five evaluation metrics. Compared with
SegZero+SAM 2, mIoU and Boundary IoU improve by
2.59 and 2.88 percentage points, reaching 0.7685
and 0.6950, respectively. Notably, precision and
recall increase simultaneously, indicating more
complete garment coverage with reduced foreground
leakage. These results demonstrate improved region
and boundary accuracy in a benchmark featuring
fine-grained garment annotations.

% Across all three benchmarks, GarmentSegAgent achieves
% the highest region overlap and boundary accuracy,
% although the magnitude of improvement varies across
% datasets. We further investigate the role of textual
% optimization and iterative refinement in the
% following sections.

\begin{figure*}[t]
    
    \centering
    \includegraphics[width=0.95\textwidth]
    {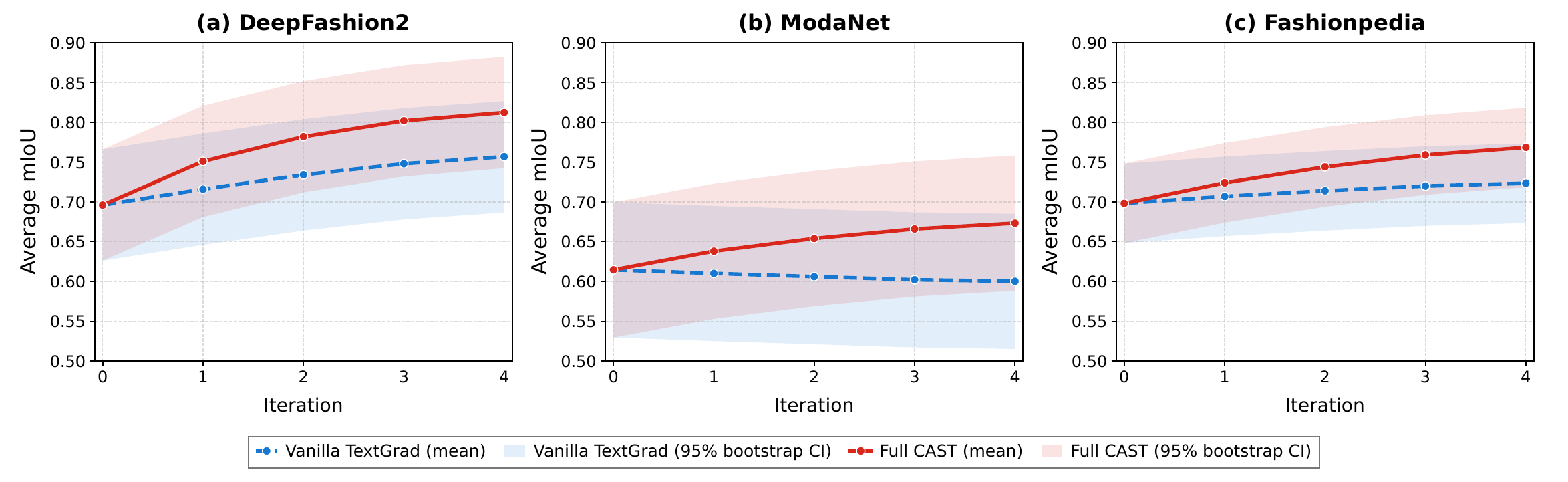}
    % \captionsetup{skip=3pt}
    \caption{Evolution of mIoU during iterative refinement
    on (a) DeepFashion2, (b) ModaNet, and (c) Fashionpedia.
    Iteration 0 denotes the initial prediction. Shaded
    regions indicate 95\% bootstrap confidence intervals.
    Predictions from early-terminated samples are retained
    for subsequent iterations.}
    \label{fig:refinement_dynamics}
\end{figure*}

\begin{table*}[t]
\centering
\caption{Refinement outcomes and inference cost on three
fashion benchmarks. IR and DR denote the percentages
of improved and degraded predictions, respectively.}
\label{tab:refinement_analysis}

\small
\setlength{\tabcolsep}{5pt}
\renewcommand{\arraystretch}{0.75}

\begin{tabular*}{\textwidth}{
@{\extracolsep{\fill}}clcccccc@{}
}
\toprule
& &
\multicolumn{3}{c}{Refinement Outcomes}
& \multicolumn{3}{c}{Inference Cost} \\
\cmidrule(lr){3-5}
\cmidrule(l){6-8}

Dataset & Method
& $\Delta$IoU$\uparrow$
& IR (\%)$\uparrow$
& DR (\%)$\downarrow$
& Avg. Steps$\downarrow$
& LLM Calls$\downarrow$
& Time (s/img)$\downarrow$ \\
\midrule

\multirow{2}{*}{DeepFashion2}
& Vanilla TextGrad
& +0.0607 & 72.4 & 18.0 & 2.2 & 4.4 & 7.8 \\
& \textbf{Full CAST}
& \textbf{+0.1163} & \textbf{88.7}
& \textbf{6.4} & 2.6 & 6.1 & 10.9 \\

\midrule

\multirow{2}{*}{ModaNet}
& Vanilla TextGrad
& -0.0144 & 43.8 & 48.2 & 2.4 & 4.8 & 8.5 \\
& \textbf{Full CAST}
& \textbf{+0.0588} & \textbf{79.6}
& \textbf{12.0} & 2.7 & 6.4 & 11.8 \\

\midrule

\multirow{2}{*}{Fashionpedia}
& Vanilla TextGrad
& +0.0253 & 61.5 & 29.2 & 2.2 & 4.4 & 7.9 \\
& \textbf{Full CAST}
& \textbf{+0.0703} & \textbf{83.5}
& \textbf{9.2} & 2.5 & 5.9 & 10.6 \\

\bottomrule
\end{tabular*}
% \vspace{-0.8em}
\end{table*}

\subsection{Textual Optimization Strategies}
\label{sec:textual_optimization_strategies}

% To examine the role of textual optimization in autonomous
% garment segmentation, we compare five refinement strategies
% against a one-pass reference. \textit{One-pass} retains the
% initial segmentation without refinement, whereas
% \textit{Rule-based Refinement} applies predefined instruction
% updates based on evaluator feedback. \textit{Vanilla TextGrad}
% optimizes the entire instruction as a single textual variable,
% while \textit{All-field TextGrad} updates all four structured
% instruction components independently. \textit{Selective
% TextGrad} restricts optimization to error-relevant components,
% and \textit{Full CAST} further incorporates garment-contract
% validation to maintain instruction consistency.

To investigate the effects of textual refinement,
we compare five optimization strategies against a
\textit{One-pass} reference without refinement.
\textit{Rule-based Refinement} applies predefined
corrections based on evaluator feedback, while
\textit{Vanilla TextGrad} optimizes the entire
instruction as a single variable.
\textit{All-field TextGrad} independently updates
all four instruction fields, whereas
\textit{Selective TextGrad} updates only fields
associated with visual errors.
\textit{Full CAST} additionally incorporates
garment-contract validation. All iterative methods
share the same foundation models, initial instructions,
evaluation protocol, and iteration budget.
Table~\ref{tab:optimization_comparison} reports
the results.

% All iterative strategies use the same foundation models,
% initial instructions, visual evaluation protocol, and maximum
% refinement budget. Their differences lie in how visual feedback
% is translated into textual updates. The comparative results
% on DeepFashion2, ModaNet, and Fashionpedia are reported in
% Table~\ref{tab:optimization_comparison}.

% The results reveal substantial differences among textual
% optimization strategies. Vanilla TextGrad improves segmentation
% on DeepFashion2 and Fashionpedia but reduces mIoU on ModaNet
% from 0.6145 to 0.6001 relative to One-pass, indicating that
% unrestricted instruction refinement can be unreliable.
% Updating all instruction components independently does not
% consistently resolve this issue. In contrast, Selective
% TextGrad outperforms both Vanilla and All-field TextGrad
% across all three benchmarks, suggesting that restricting
% updates to error-relevant variables is beneficial.

The results highlight the limitations of unrestricted
textual refinement. Although Vanilla TextGrad improves
mIoU on DeepFashion2 and Fashionpedia, it degrades
performance on ModaNet from 0.6145 to 0.6001
relative to One-pass. Updating all instruction fields
does not consistently improve performance either.
By contrast, Selective TextGrad outperforms both
variants across all three benchmarks, supporting
the effectiveness of error-conditioned field selection.

% Full CAST achieves the highest mIoU and Boundary IoU on
% every benchmark. Compared with Selective TextGrad, it
% improves mIoU by 2.88, 2.79, and 3.46 percentage points
% on DeepFashion2, ModaNet, and Fashionpedia, respectively.
% The corresponding boundary improvements are particularly
% pronounced on ModaNet and Fashionpedia, indicating more
% accurate garment delineation. These findings support
% the effectiveness of combining selective textual
% optimization with garment-specific consistency control.

Full CAST further achieves the highest mIoU and
Boundary IoU on every benchmark. Relative to Selective
TextGrad, its mIoU gains reach 2.88, 2.79, and
3.46 percentage points on DeepFashion2, ModaNet,
and Fashionpedia, respectively. Boundary IoU improvements
are especially notable on ModaNet and Fashionpedia,
reaching 4.69 and 3.38 percentage points.
These results underscore the complementary roles
of selective instruction updates and garment-specific
consistency validation in reliable textual refinement.

\subsection{Self-Refinement Analysis}
\label{sec:refinement_analysis}

% \noindent\textbf{Iterative Refinement Dynamics.}
% Figure~\ref{fig:refinement_dynamics} tracks the evolution
% of mIoU under Vanilla TextGrad and CAST, starting from
% identical initial predictions. On DeepFashion2 and
% Fashionpedia, both methods improve with successive
% refinement steps, but CAST achieves larger gains and
% maintains a widening performance advantage. A different
% pattern emerges on ModaNet, where Vanilla TextGrad
% gradually decreases from 0.6145 to 0.6001, while CAST
% steadily improves to 0.6733. These contrasting trajectories
% indicate that additional refinement iterations alone do
% not necessarily improve segmentation quality; the way
% visual feedback guides textual updates is equally
% important. For early-terminated samples, the final
% predictions are carried forward to ensure evaluation
% over a consistent image set. 
% A representative refinement trace illustrating how
% visual errors guide selective instruction updates
% is provided in Appendix~\ref{app:cast_analysis}
% (Table~\ref{tab:app_prompt_evolution}).

\noindent\textbf{Iterative Refinement Dynamics.}
Figure~\ref{fig:refinement_dynamics} compares the evolution
of mIoU across refinement iterations for Vanilla TextGrad
and CAST under identical initial predictions. On DeepFashion2
and Fashionpedia, both methods improve through iterative
refinement, with CAST achieving larger gains.
A contrasting trend appears on ModaNet, where Vanilla
TextGrad reduces mIoU from 0.6145 to 0.6001, whereas
CAST progressively improves it to 0.6733. These divergent
trajectories demonstrate that repeated refinement does
not inherently improve segmentation; its effectiveness
depends on how visual errors guide textual updates.
For early-terminated samples, the final predictions
are retained in subsequent plotted iterations to
maintain a consistent evaluation set.
% A trace of error-attributed instruction
% updates is provided in Appendix~\ref{app:cast_analysis}
% (Table~\ref{tab:app_prompt_evolution}).
Further examples of iterative mask evolution and instruction updates are provided
in Appendix~\ref{app:additional_results}
(Fig.~\ref{fig:iteration_refinement}) and
Appendix~\ref{app:cast_analysis}
(Table~\ref{tab:app_prompt_evolution}), respectively. 
An illustrative case study of error-attributed
instruction refinement is provided in
Appendix~\ref{app:prompt_evolution}.

% \begin{figure*}[t]
%     \centering
%     \includegraphics[width=0.96\textwidth]
%     {pictures/iteration_performance_legend_updated_lower.pdf}
%     \caption{
%     Iterative refinement performance on three fashion
%     segmentation benchmarks: (a) DeepFashion2,
%     (b) ModaNet, and (c) Fashionpedia.
%     We compare the evolution of mIoU between conventional
%     TextGrad and the proposed CAST across successive
%     segmentation iterations. For samples terminated early,
%     their final predictions are retained in subsequent
%     evaluation steps.
%     }
%     \label{fig:refinement_dynamics}
% \end{figure*}

% \noindent\textbf{Refinement Reliability.}
% Average segmentation gains do not fully capture the risk
% of harmful instruction updates. We therefore measure
% the mean IoU change between initial and final predictions
% ($\Delta$IoU), together with the improvement rate (IR)
% and degradation rate (DR), defined as the proportions
% of samples whose IoU increases or decreases after
% refinement. As shown in Table~\ref{tab:refinement_analysis},
% CAST achieves larger refinement gains and substantially
% lower degradation rates across all benchmarks.
% The difference is particularly pronounced on ModaNet,
% where Vanilla TextGrad degrades 48.2\% of predictions,
% compared with 12.0\% for CAST. Similar reductions
% are observed on DeepFashion2 and Fashionpedia, indicating
% more reliable image-specific refinement.

\begin{figure*}[]
    \centering
    \includegraphics[width=0.9\textwidth]
    {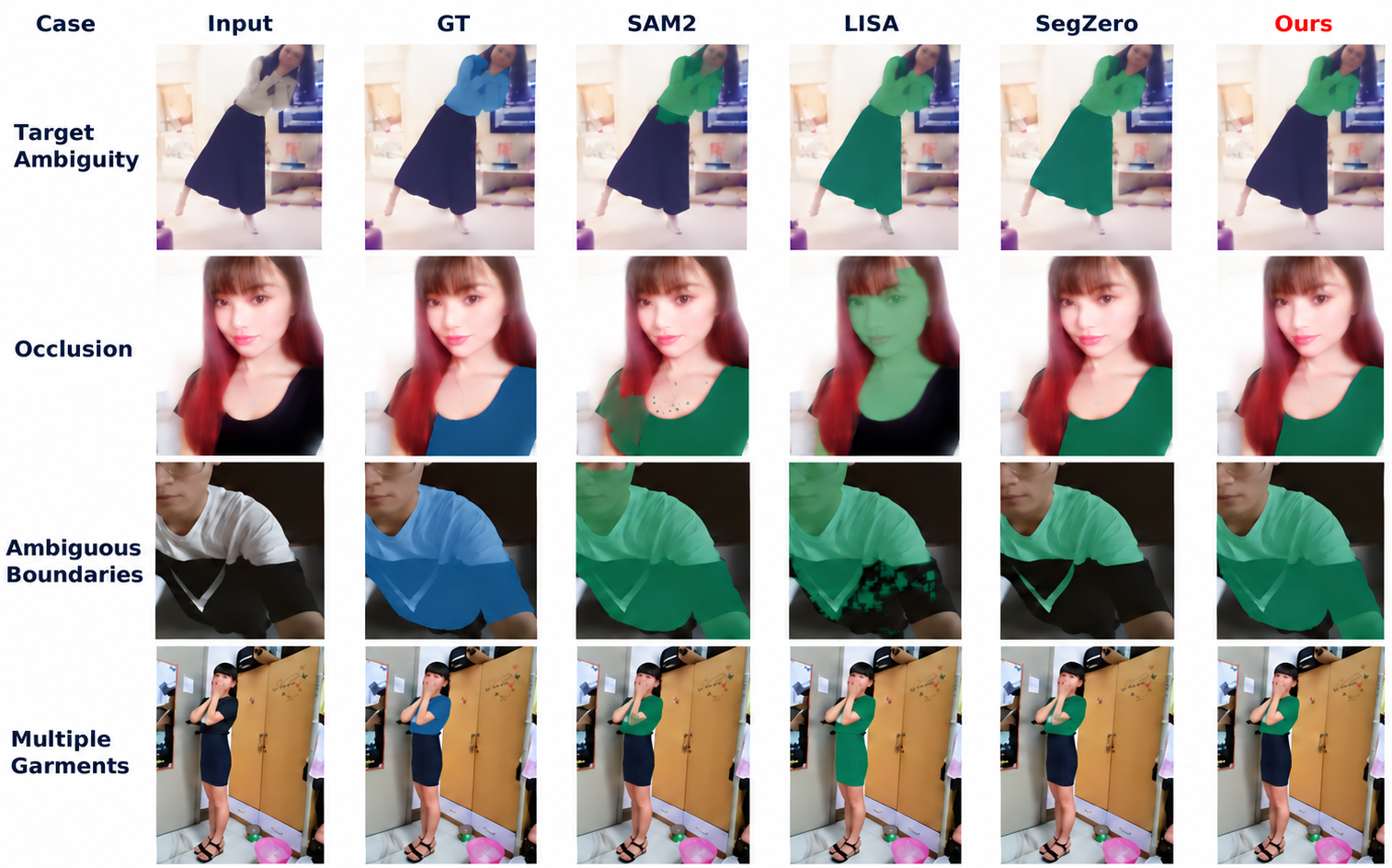}
    % \caption{
    % Qualitative analysis of CAST-guided garment segmentation
    % refinement. Representative examples illustrate
    % evaluator-generated visual feedback, selected textual
    % instruction components, and the resulting segmentation
    % updates. Ground-truth masks are provided only for
    % offline comparison and are not accessible during inference.
    % }
    % \caption{Qualitative comparison of upper-garment
    % segmentation under challenging visual conditions,
    % including target ambiguity, occlusion, ambiguous
    % boundaries, and multiple garments (top to bottom).
    % Columns show the input image, ground truth (GT),
    % and predictions from SAM 2, LISA, SegZero + SAM 2,
    % and GarmentSegAgent (Ours). Ground truth is used
    % only for offline evaluation.}

\caption{Qualitative comparison of upper-garment segmentation under target ambiguity, occlusion, ambiguous boundaries, and multiple garments (top to bottom). Columns show the input, ground truth (GT), and predictions from SAM 2, LISA, SegZero + SAM 2, and GarmentSegAgent (Ours). GT is used only for evaluation.}
    \label{fig:qualitative_cast}
    \vspace{-0.8em}
\end{figure*}

\noindent\textbf{Refinement Reliability.}
Average segmentation gains alone do not capture the
risk of harmful instruction updates. We therefore
report the mean IoU change ($\Delta$IoU) between initial
and final predictions, along with improvement rate (IR)
and degradation rate (DR), measuring the proportions
of samples with increased and decreased IoU, respectively.
As shown in Table~\ref{tab:refinement_analysis},
CAST consistently achieves larger IoU gains, higher
improvement rates, and lower degradation rates.
The contrast is particularly evident on ModaNet,
where CAST increases IR from 43.8\% to 79.6\% while
reducing DR from 48.2\% to 12.0\%. Similar reductions
in degradation are observed on DeepFashion2 and
Fashionpedia, indicating more reliable image-specific
textual refinement.

% \noindent\textbf{Inference Cost.}
% The reliability gains come with additional computational
% overhead. CAST requires more refinement steps and
% language-model calls than Vanilla TextGrad, increasing
% average inference time by approximately three seconds
% per image across the evaluated benchmarks. This trade-off
% reflects the additional processing associated with
% error-attributed optimization and garment-constrained
% validation, rather than a reduction in inference cost.

\noindent\textbf{Inference Cost.}
These reliability improvements incur additional
inference overhead. Compared with Vanilla TextGrad,
CAST requires approximately 1.5--1.7 more language-model
calls per image, with average inference time increasing
by 2.7--3.3 seconds across the three benchmarks.
For instance, on ModaNet, inference time increases
from 8.5 to 11.8 seconds per image, accompanied by
a substantial reduction in degradation rate.
This trade-off reflects the additional computation
associated with error-attributed refinement and
garment-constrained validation, rather than improved
inference efficiency.

\subsection{Ablation Studies}
\label{sec:ablation_studies}

We investigate the contributions of three core CAST
mechanisms: visual error attribution (VEA), selective
routing, and garment-contract validation.
As shown in Table~\ref{tab:cast_ablation},
removing VEA reduces mIoU from 0.8124 to 0.7128
and Boundary IoU from 0.6318 to 0.4924,
highlighting the importance of structured visual
feedback. Disabling selective routing similarly
degrades mIoU to 0.7287 and Boundary IoU to 0.5121,
supporting the benefit of error-guided instruction
updates. Without garment-contract validation,
mIoU drops to 0.7128, while the degradation rate
increases from 6.4\% to 18.0\%, indicating its
role in mitigating harmful updates.
These findings demonstrate the complementary
contributions of error attribution, selective
optimization, and constraint validation.
Additional validation statistics are provided
in Appendix~\ref{app:cast_analysis}
(Table~\ref{tab:app_validation}).

\begin{table}[t]
\centering
% \caption{Ablation of CAST mechanisms on DeepFashion2.
% DR denotes the percentage of samples with lower final
% than initial IoU.}

\caption{Ablation of CAST on DeepFashion2. DR: percentage of samples with decreased final IoU.}

\label{tab:cast_ablation}

\small
\setlength{\tabcolsep}{15pt}
\renewcommand{\arraystretch}{0.7}

\begin{tabular}{@{}lccc@{}}
\toprule
Variant
& mIoU$\uparrow$
& BIoU$\uparrow$
& DR (\%)$\downarrow$ \\
\midrule

\textbf{Full CAST}
& \textbf{0.8124}
& \textbf{0.6318}
& \textbf{6.4} \\

w/o VEA
& 0.7128 & 0.4924 & 14.0 \\

w/o Selective Routing
& 0.7287 & 0.5121 & 14.0 \\

w/o Garment Validation
& 0.7128 & 0.4917 & 18.0 \\

\bottomrule
\end{tabular}
\vspace{-1.15em}
\end{table}

% \subsection{Qualitative Analysis}

% We further provide qualitative comparisons to analyze
% the behavior of SegAgent.

% As shown in Fig.~\ref{fig:qualitative},
% existing approaches often fail under challenging conditions,including:

% \begin{itemize}

% \item similar appearance between garments and backgrounds;

% \item complex human poses;

% \item severe occlusion;

% \item multiple clothing instances.

% \end{itemize}

% Through evaluator-guided refinement,
% SegAgent progressively removes irrelevant regions
% and restores missing garment boundaries.

% These qualitative results demonstrate that SegAgent performs
% self-correcting segmentation rather than one-pass prediction.

% \begin{figure}[t]

% \centering

% \includegraphics[width=\linewidth]
% {pictures/result_quality.png}

% \caption{
% Qualitative comparison of garment segmentation results.
% }

% \label{fig:qualitative}

% \end{figure}

% \subsection{Failure Analysis}

% Although SegAgent improves robustness under complex scenarios,
% several limitations remain.

% Typical failure cases include:

% \begin{itemize}

% \item
% severe garment occlusion where visual evidence is insufficient;

% \item
% ambiguous boundaries between garments and human regions;

% \item
% extremely low-resolution input images;

% \item
% multiple overlapping garments with similar appearance.

% \end{itemize}

% These limitations indicate several future research directions:

% \begin{itemize}

% \item stronger garment-specific visual reasoning;

% \item more reliable evaluator feedback mechanisms;

% \item improved long-horizon agent planning.

% \end{itemize}

\subsection{Qualitative Comparison}
\label{sec:qualitative_analysis}

% We further examine segmentation quality under four
% challenging scenarios: target ambiguity, occlusion,
% ambiguous garment boundaries, and multiple garments.
% As illustrated in Fig.~\ref{fig:qualitative_cast},
% GarmentSegAgent produces masks that more closely
% match the target garments, while competing methods
% occasionally include unrelated body regions or
% lower-body clothing. The differences are particularly
% evident in cases involving target ambiguity and
% multiple garments, highlighting the importance of
% target-specific localization and region exclusion.

% Qualitative results further illustrate the effectiveness
% of GarmentSegAgent under challenging conditions,
% including target ambiguity, occlusion, ambiguous
% garment boundaries, and multiple garments.
% As shown in Fig.~\ref{fig:qualitative_cast},
% GarmentSegAgent more accurately isolates the target
% garment, preserving visible garment regions while
% reducing leakage into adjacent clothing and background
% areas. In contrast, competing methods may omit
% garment parts or include unrelated regions.
% These differences are particularly evident in scenes
% with ambiguous targets or multiple garments,
% highlighting the importance of precise target
% localization and region exclusion.

Qualitative results demonstrate the effectiveness of GarmentSegAgent under target ambiguity, occlusion, ambiguous boundaries, and multiple garments. As shown in Fig.~\ref{fig:qualitative_cast}, GarmentSegAgent better preserves target garment regions while reducing leakage into adjacent clothing and backgrounds. In contrast, competing methods often omit garment parts or include unrelated regions, highlighting the importance of accurate target localization and region exclusion.

\section{Limitations}
\label{sec:limitations}

% Our current evaluation focuses on single-target
% upper-garment segmentation in Web fashion images.
% Extending the framework to other garment categories
% and simultaneous multi-instance segmentation requires
% further investigation. Moreover, CAST relies on visual
% feedback and candidate masks generated by frozen
% foundation models, while garment-constrained validation
% does not guarantee pixel-level accuracy. Finally,
% iterative evaluation and textual optimization introduce
% additional inference overhead. Future work will
% explore broader garment understanding and more
% efficient refinement strategies.

Our evaluation focuses on single-target upper-garment segmentation in Web fashion images. Extending GarmentSegAgent to broader garment categories and multi-instance scenarios remains unexplored. Moreover, CAST relies on visual feedback and candidate masks from frozen foundation models, while garment-constrained validation cannot guarantee pixel-level accuracy. Iterative refinement also introduces additional inference overhead. Future work will investigate broader garment understanding and more efficient refinement strategies.

\section{Conclusion}
\label{sec:conclusion}

% In this paper, we introduced \textbf{GarmentSegAgent},
% an autonomous framework for garment segmentation in
% heterogeneous Web fashion images. The framework integrates
% semantic planning, spatial execution, and visual evaluation
% to enable iterative refinement without human-provided
% spatial prompts or model fine-tuning. At its core,
% \textbf{CAST} combines visual error attribution,
% selective textual optimization, and garment-constrained
% validation to improve segmentation decisions.
% Experiments on DeepFashion2, ModaNet, and Fashionpedia
% demonstrate consistent improvements in region overlap
% and boundary accuracy, while further analyses show
% more reliable refinement than conventional TextGrad.
% These findings highlight the potential of
% error-guided textual optimization for coordinating
% frozen foundation models in autonomous visual tasks.

We presented GarmentSegAgent, an autonomous framework for garment segmentation in heterogeneous Web fashion images. Its core mechanism, CAST, combines visual error attribution, selective textual optimization, and garment-constrained validation to iteratively improve segmentation without human-provided spatial prompts or fine-tuning. Experiments on DeepFashion2, ModaNet, and Fashionpedia demonstrate consistent improvements in region overlap and boundary
accuracy over competitive baselines. These findings highlight the potential of error-
guided textual optimization for coordinating frozen foundation models in visual tasks.

% \newpage

\bibliographystyle{ACM-Reference-Format}
\bibliography{references}

\appendix

\section{Target Selection and Evaluation Protocol}
\label{app:implementation}

For images containing multiple eligible upper-body
garments, we designate the visible instance with the
largest garment area as the segmentation target.
This selection criterion establishes a consistent
target definition across evaluated methods, particularly
in scenes containing overlapping or adjacent garments.
Predicted masks are evaluated against the corresponding
target-instance annotation rather than other visible
garment instances. Importantly, ground-truth masks and
instance annotations are not accessible to GarmentSegAgent
during planning, segmentation, evaluation, or textual
refinement. They are used exclusively for offline
annotation matching and metric computation, ensuring
that visual feedback remains independent of reference
segmentation masks.

\begin{figure*}[t]
    \centering
    \includegraphics[width=\textwidth]
    {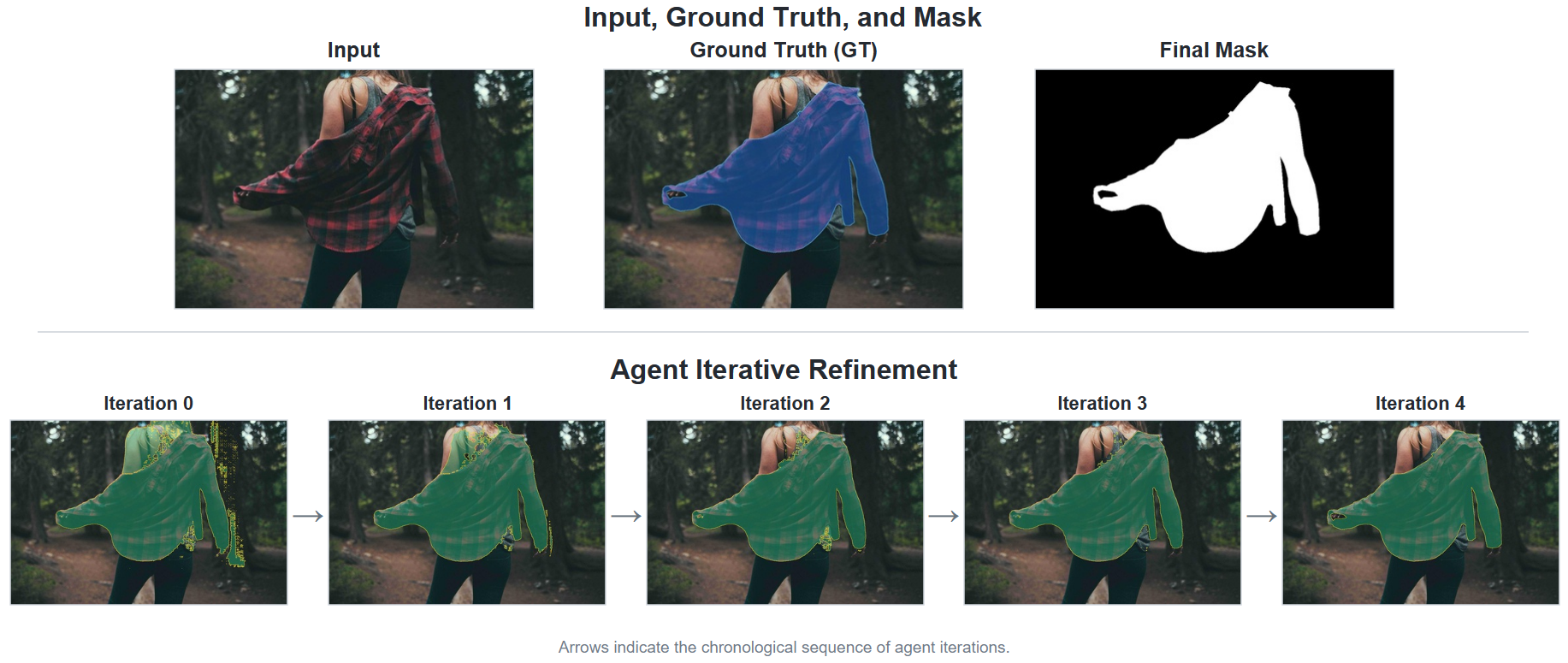}
    \caption{Visualization of iterative garment
    segmentation refinement. The top row presents
    the input image, ground truth (GT), and final
    prediction, while the bottom row shows the
    predicted masks across Iterations 0--4.
    GT is used only for offline comparison.}
    \label{fig:iteration_refinement}
\end{figure*}

\begin{figure*}[]
    \centering
    \includegraphics[width=\textwidth]{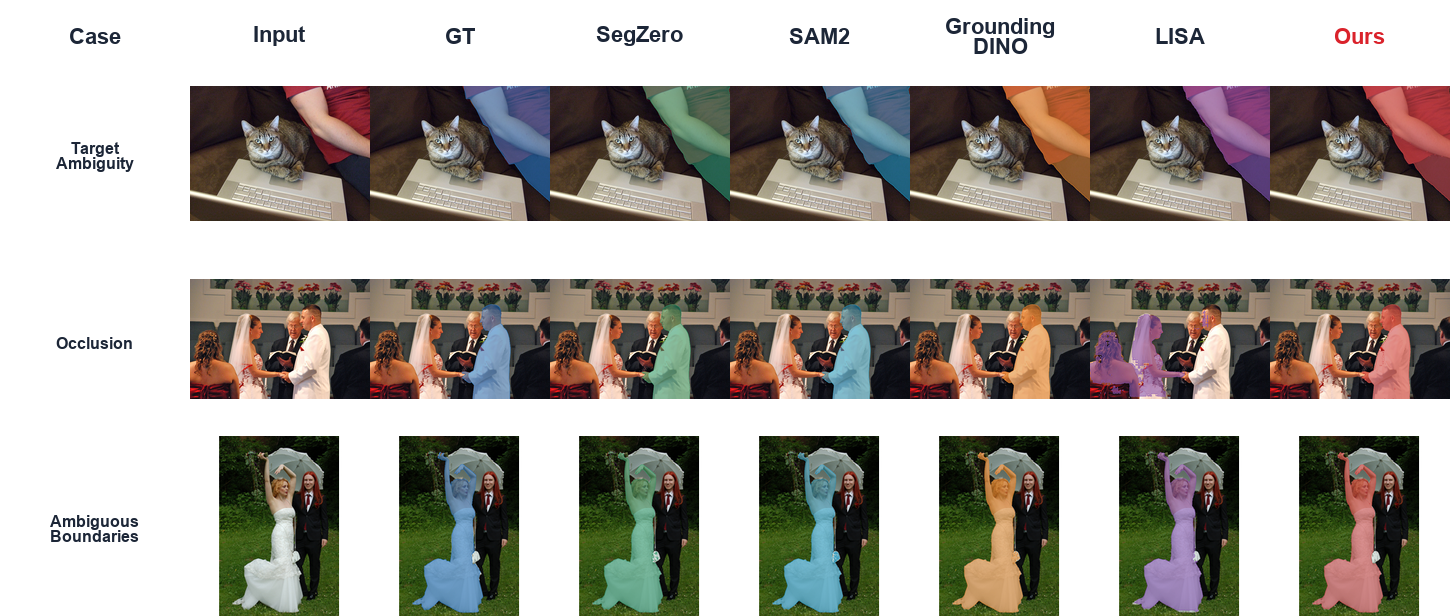}
    
\caption{Qualitative comparison of person segmentation on COCO
under target ambiguity, occlusion, and ambiguous boundaries.}

    \label{fig:app_coco_qualitative}
\end{figure*}

\section{Additional Experimental Results}
\label{app:additional_results}

\noindent\textbf{Visualization of Iterative Refinement.}
To further illustrate the refinement process,
Fig.~\ref{fig:iteration_refinement} presents the
evolution of garment segmentation across successive
iterations. Starting from the initial prediction,
GarmentSegAgent progressively improves garment
coverage and boundary alignment, particularly
around the sleeves and lower edges. This example
provides a visual illustration of how feedback-driven
textual refinement can correct localized segmentation
errors without modifying foundation-model parameters.

\noindent\textbf{Cross-domain Person Segmentation on COCO.}
We further evaluate the generalization of the agentic
segmentation framework on COCO~\cite{lin2014coco} person segmentation,
which differs from garment extraction in both target
semantics and image distribution.
As shown in Table~\ref{tab:app_coco}, the TextGrad-based
GarmentSegAgent achieves an mIoU of 0.7138, outperforming
SegZero by 1.68 percentage points. Improvements are also
observed in Dice, Boundary IoU, and precision, while the
highest recall of 0.9045 indicates stronger foreground
coverage. Grounding DINO + SAM 2 achieves higher region
and boundary accuracy, potentially benefiting from
detector-guided localization of the predefined person
category. In contrast, GarmentSegAgent relies on
language-guided planning and general textual refinement,
without task-specific adaptation for person segmentation.

\begin{table}[!h]
\centering
\caption{Cross-domain person segmentation on COCO.
BIoU denotes Boundary IoU.}
\label{tab:app_coco}
\small
\setlength{\tabcolsep}{3pt}
\renewcommand{\arraystretch}{1.08}
\begin{tabular}{lccccc}
\toprule
Method & mIoU & Dice & BIoU & Prec. & Rec. \\
\midrule
READ
& 0.5820 & 0.6720 & 0.4660 & 0.6460 & 0.6800 \\
SEEM
& 0.6110 & 0.6560 & 0.4990 & 0.6610 & 0.7860 \\
LISA
& 0.6420 & 0.7140 & 0.5250 & 0.7060 & 0.8830 \\
SegZero
& 0.6970 & 0.7520 & 0.5770 & 0.7610 & 0.8770 \\
Grounding DINO+SAM 2
& \textbf{0.7650} & \textbf{0.8210}
& \textbf{0.6450} & \textbf{0.8320} & 0.8500 \\
\midrule
GarmentSegAgent 
& 0.7138 & 0.7826 & 0.6049
& 0.8017 & \textbf{0.9045} \\
\bottomrule
\end{tabular}
\end{table}

Qualitative results in Fig.~\ref{fig:app_coco_qualitative}
further illustrate the cross-domain behavior of GarmentSegAgent
under target ambiguity, occlusion, and ambiguous boundaries.
Despite being developed for garment segmentation, GarmentSegAgent
can identify and segment the target person across diverse scenes,
preserving visible foreground regions while limiting interference
from surrounding objects and individuals. These observations
suggest that its language-guided planning and textual refinement
can transfer beyond garment-specific segmentation tasks.

% \begin{figure*}[!t]
%     \centering
%     \includegraphics[
%         width=\textwidth,
%         keepaspectratio
%     ]{pictures/iteration5_sample_style_from_svg.png}
%     \caption{Visualization of the input image, ground truth mask,
%     final mask, and the iterative refinement process across five agent
%     iterations. The arrows indicate the chronological sequence of the
%     refinement process.}
%     \label{fig:iteration_refinement}
% \end{figure*}

\noindent\textbf{Performance across Fashion Image Sources.}
We further compare segmentation performance between
shop and consumer images in DeepFashion2.
As reported in Table~\ref{tab:app_domains},
GarmentSegAgent with CAST consistently outperforms
its conventional TextGrad counterpart and SegZero
+ SAM 2 across both domains. CAST improves mIoU
over conventional TextGrad by 0.9 and 0.7 percentage
points on shop and consumer images, respectively.
All methods achieve lower accuracy on consumer images,
indicating greater segmentation difficulty in this domain.

% \begin{table}[t]
% \centering
% \caption{Segmentation performance across shop and
% consumer images in DeepFashion2.}
% \label{tab:app_domains}
% \small
% \setlength{\tabcolsep}{5pt}
% \renewcommand{\arraystretch}{1.08}
% \begin{tabular}{llccc}
% \toprule
% Domain & Method & Images & mIoU & BIoU \\
% \midrule
% \multirow{3}{*}{Shop}
% & SegZero+SAM 2 &500 &0.808&0.625 \\
% & TextGrad &500 &0.817 &0.638 \\
% & CAST &500 &0.826 &0.650 \\
% \midrule
% \multirow{3}{*}{Consumer}
% & SegZero+SAM 2 &500 &0.784 &0.584 \\
% & TextGrad &500 &0.792 &0.599 \\
% & CAST &500 &0.799 &0.614 \\
% \bottomrule
% \end{tabular}
% \end{table}

\begin{table}[!h]
\centering
\caption{Segmentation performance across shop and
consumer images in DeepFashion2 (500 images per
domain). GSA denotes GarmentSegAgent, and BIoU
denotes Boundary IoU.}
\label{tab:app_domains}

\small
\setlength{\tabcolsep}{4pt}
\renewcommand{\arraystretch}{1.08}

\begin{tabular*}{\columnwidth}{
@{\extracolsep{\fill}}llcc@{}
}
\toprule
Domain & Method
& mIoU$\uparrow$ & BIoU$\uparrow$ \\
\midrule

\multirow{3}{*}{Shop}
& SegZero + SAM 2
& 0.808 & 0.625 \\
& GSA (TextGrad)
& 0.817 & 0.638 \\
& \textbf{GSA (CAST)}
& \textbf{0.826} & \textbf{0.650} \\

\midrule

\multirow{3}{*}{Consumer}
& SegZero + SAM 2
& 0.784 & 0.584 \\
& GSA (TextGrad)
& 0.792 & 0.599 \\
& \textbf{GSA (CAST)}
& \textbf{0.799} & \textbf{0.614} \\

\bottomrule
\end{tabular*}
\end{table}

\section{CAST Mechanism Analysis}
\label{app:cast_analysis}

\noindent\textbf{Prompt Evolution Case Study.}
We examine the textual refinement process by tracing
how CAST transforms visual segmentation feedback into
targeted instruction updates. As illustrated in
Table~\ref{tab:app_prompt_evolution}, the trace records
the attributed visual error, selected instruction
fields, textual gradient feedback, and instructions
before and after optimization. We also examine whether
unselected fields remain unchanged and how the proposed
updates are handled by garment-contract validation.
The corresponding IoU change is measured offline to
assess the effect of refinement on segmentation quality.

\begin{table}[!h]
\centering
\caption{Condensed CAST refinement trace for Fashionpedia
image 10324. Textual entries summarize the recorded
feedback and updates; IoU is measured offline.}
\label{tab:app_prompt_evolution}

\small
\setlength{\tabcolsep}{3pt}
\renewcommand{\arraystretch}{1.10}

\begin{tabular}{@{}p{0.25\columnwidth}
                   p{0.69\columnwidth}@{}}
\toprule
Component & Refinement Trace \\
\midrule

Visual Errors
& Missing jacket regions (torso, sleeves, collar,
and hem), with erroneous inclusion of hands,
skin, and a fur collar. \\

Selected Fields
& Coverage and exclusion. \\

Initial Instruction
& Preserve visible garment fabric and exclude
body parts, other garments, and background. \\

Textual Feedback
& Re-examine incomplete jacket coverage and
the inclusion of non-garment regions. \\

Updated Fields
& \textit{Coverage}: retain the visible torso,
sleeves, cuffs, collar, and hem.
\textit{Exclusion}: remove body parts,
accessories, unrelated garments, and background. \\

Preserved Rule
& Select the eligible upper-body garment
with the largest visible fabric area. \\

IoU   \hfill \\ (Before $\rightarrow$ After)
& $0.2247 \rightarrow 0.8925$
($\Delta=+0.6678$). \\

\bottomrule
\end{tabular}
\end{table}

\noindent\textbf{Garment-Contract Validation.}
We further examine the outcomes of garment-contract
validation across the three benchmarks.
As reported in Table~\ref{tab:app_validation},
26.0--31.8\% of proposed updates require repair,
while no rollbacks are recorded. Independent
assessment identifies residual constraint-violation
rates of 0.2--0.8\%, suggesting that validation
substantially supports instruction consistency
without ensuring perfect constraint compliance.

\begin{table}[!h]
\centering
\caption{Garment-contract validation outcomes.
Direct, Repair, and Rollback denote the final
handling of proposed updates; Viol. denotes
independently assessed constraint violations.}
\label{tab:app_validation}

\small
\setlength{\tabcolsep}{2pt}
\renewcommand{\arraystretch}{1.10}

\begin{tabular*}{\columnwidth}{
@{\extracolsep{\fill}}lccccc@{}
}
\toprule
Dataset
& Updates
& Direct (\%)
& Repair (\%)
& Rollback (\%)
& Viol. (\%) \\
\midrule

DeepFashion2
& 650 & 74.0 & 26.0 & 0.0 & 0.2 \\

ModaNet
& 640 & 71.9 & 28.1 & 0.0 & 0.3 \\

Fashionpedia
& 530 & 68.2 & 31.8 & 0.0 & 0.8 \\

\bottomrule
\end{tabular*}
\end{table}

\section{Case Study of Iterative Prompt Evolution}
\label{app:prompt_evolution}

We illustrate iterative instruction refinement using
Fashionpedia image 18374, featuring a red-and-black
plaid overshirt. The case covers five segmentation
iterations, from initial garment localization to
boundary refinement. As the original textual logs
were not retained, the following prompts are
illustrative reconstructions based on the available
segmentation masks and CAST protocol, rather than
verbatim execution records.

\begin{iterprompt}{Prompt 0: Initial Garment Specification}
\small
\textbf{Instruction:}
Identify and segment the complete red-and-black plaid
overshirt, including its collar, sleeves, cuffs,
and lower hem. Exclude exposed skin, trousers,
surrounding objects, and background regions.
Preserve the garment's identity and visible structure.
\end{iterprompt}

\begin{iterprompt}{Prompt 1: Background Exclusion}
\small
\textbf{Feedback:} Background leakage and fragmented
garment regions in the initial prediction.

\textbf{Selected fields:} Localization, Coverage, Exclusion.

\textbf{Update:} Focus on the connected plaid fabric
regions forming the overshirt. Recover missing garment
areas while excluding unrelated background pixels.
Preserve the collar, sleeves, and lower hem without
changing the target garment.
\end{iterprompt}

\begin{iterprompt}{Prompt 2: Contour Refinement}
\small
\textbf{Feedback:} Residual background leakage and
overly broad spatial localization near garment edges.

\textbf{Selected fields:} Localization, Exclusion.

\textbf{Update:} Tighten the upper and right garment
boundaries to follow the visible fabric contours.
Exclude adjacent skin and tree background regions
while preserving the natural sleeve--torso gap
and previously identified garment structures.
\end{iterprompt}

\begin{iterprompt}{Prompt 3: Local Region Recovery}
\small
\textbf{Feedback:} Missing regions near the cuff and
lower-right hem, with residual leakage along the
left sleeve.

\textbf{Selected fields:} Localization, Coverage, Exclusion.

\textbf{Update:} Recover omitted plaid fabric near
the cuff and lower hem. Refine the left sleeve contour
to suppress surrounding background while preserving
thin garment regions and natural boundaries.
Avoid unnecessary expansion into adjacent areas.
\end{iterprompt}

\begin{iterprompt}{Prompt 4: Final Boundary Refinement}
\small
\textbf{Feedback:} Minor residual background near
garment boundaries, with most visible regions recovered.

\textbf{Selected fields:} Localization, Exclusion.

\textbf{Update:} Apply localized corrections to remove
remaining background leakage. Maintain alignment with
the visible plaid fabric while preserving the collar,
sleeve contours, cuffs, and lower hem. Avoid altering
correctly segmented garment regions.
\end{iterprompt}

\paragraph{Analysis.}
This illustrative sequence demonstrates how different
visual errors can activate distinct instruction fields
during iterative refinement. Early updates address
background leakage and incomplete garment coverage,
while later refinements focus on localized boundaries
and residual errors. Throughout the sequence, the
target instruction remains unchanged, whereas
localization, coverage, and exclusion are selectively
revised according to the attributed errors.
Garment-contract validation checks candidate instructions
for task consistency before subsequent execution.
Although the reconstructed prompts are not verified
execution logs, they illustrate the intended interaction
between visual error attribution, selective TextGrad,
and constrained instruction updates.

\section{Theoretical Analysis of CAST}
\label{app:theoretical_analysis}

We analyze two structural properties of CAST:
the locality of selective textual updates and the
preservation of garment-specific constraints.

\noindent\textbf{Selective Update Locality.}
Let $p_t=(p_t^{(1)},\ldots,p_t^{(m)})$ denote the
structured instruction at iteration $t$, where
$m=4$ corresponds to the target, localization,
coverage, and exclusion fields.
Given the error-attributed active set
$\mathcal{K}_t\subseteq\{1,\ldots,m\}$,
CAST generates a candidate update as
\begin{equation}
\widetilde{p}_{t+1}^{(j)}=
\begin{cases}
\mathcal{U}_{\mathrm{TGD}}
(p_t^{(j)},\gamma_t^{(j)}),
& j\in\mathcal{K}_t,\\
p_t^{(j)},
& j\notin\mathcal{K}_t.
\end{cases}
\end{equation}
The final instruction is obtained through
$p_{t+1}=\mathcal{G}_{g,\mathcal{K}_t}
(\widetilde{p}_{t+1},p_t)$.

\begin{proposition}[Update Locality]
\label{prop:update_locality}
Suppose the garment-constrained update operator
preserves all inactive fields, including during
validation and repair. Then
\begin{equation}
p_{t+1}^{(j)}=p_t^{(j)},
\quad \forall j\notin\mathcal{K}_t,
\end{equation}
and the number of modified instruction fields
satisfies
\begin{equation}
\left|
\left\{j:p_{t+1}^{(j)}\neq p_t^{(j)}\right\}
\right|
\leq |\mathcal{K}_t|\leq m.
\end{equation}
\end{proposition}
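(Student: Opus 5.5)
The argument is a direct unwinding of the two-stage definition of $p_{t+1}$: first the selective TextGrad step, then the validation operator $\mathcal{G}_{g,\mathcal{K}_t}$. We first make the hypothesis precise. Call $\mathcal{G}_{g,\mathcal{K}_t}$ \emph{inactive-preserving} if, for every candidate $\widetilde{p}$ and previous instruction $p_t$ with $\widetilde{p}^{(j)}=p_t^{(j)}$ for all $j\notin\mathcal{K}_t$, its output $q=\mathcal{G}_{g,\mathcal{K}_t}(\widetilde{p},p_t)$ satisfies $q^{(j)}=p_t^{(j)}$ for all $j\notin\mathcal{K}_t$. This must hold on each of the three branches: direct acceptance, repair, and rollback. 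This is how we read ``preserves all inactive fields, including during validation and repair.''

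\textbf{Step 1 (candidate locality).} Fix $j\notin\mathcal{K}_t$. By the second case of the selective update equation, $\widetilde{p}_{t+1}^{(j)}=p_t^{(j)}$. Hence the candidate agrees with $p_t$ on every inactive field, so the precondition of the inactive-preserving hypothesis is met.

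\textbf{Step 2 (validation locality).} Consider the three branches of $\mathcal{G}_{g,\mathcal{K}_t}$ in turn.
\begin{itemize}
\item \emph{Acceptance.} Here $p_{t+1}=\widetilde{p}_{t+1}$, so Step 1 gives the claim directly.
\item \emph{Rollback.} Here $p_{t+1}=p_t$, so the claim holds trivially, and in fact no field changes.
\item \emph{Repair.} This is the only branch not fixed by the construction. The claim holds exactly because the hypothesis assumes repair leaves inactive fields untouched.
\end{itemize}
In every branch, $p_{t+1}^{(j)}=p_t^{(j)}$ for all $j\notin\mathcal{K}_t$, which is the first assertion.

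\textbf{Step 3 (counting).} By Step 2, the set of modified fields $\{j: p_{t+1}^{(j)}\neq p_t^{(j)}\}$ contains no index outside $\mathcal{K}_t$, so it is a subset of $\mathcal{K}_t$. Monotonicity of cardinality then gives the first inequality. Since $\mathcal{K}_t\subseteq\{1,\dots,m\}$, we also have $|\mathcal{K}_t|\le m$.

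The main obstacle is Step 2 on the repair branch. Nothing in the paper's description of $\mathcal{G}$ forces repair to be local, because a repair could in principle rewrite the target field to restore contract compliance. The proposition is therefore only as strong as its hypothesis. The proof should state explicitly that this is where the assumption is used, and note that without it only the weaker bound $\le m$ survives.
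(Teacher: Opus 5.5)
Your proof is correct and follows the paper's argument. Selective TextGrad fixes every inactive field by construction, the assumed field-preserving validator leaves those fields unchanged, and so the set of modified fields lies in $\mathcal{K}_t\subseteq\{1,\ldots,m\}$. Your branch-by-branch treatment of acceptance, repair, and rollback spells out what the paper states in one line, and it correctly identifies repair as the only branch where the hypothesis is actually needed.
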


\begin{proof}
By construction, selective TextGrad leaves every
inactive field unchanged. The assumed field-preserving
validation operator does not modify these fields.
Consequently, only fields belonging to
$\mathcal{K}_t$ can change, establishing the bound.
\end{proof}

\noindent\textbf{Garment-Contract Feasibility.}
Let $\mathcal{C}(g)$ denote the set of instructions
satisfying the garment contract $g$.
CAST validates each candidate instruction and
accepts a direct or repaired update only when it
passes the contract check; otherwise, it restores
the previous instruction.

\begin{proposition}[Constraint Preservation]
\label{prop:constraint_preservation}
Assume (i) $p_0\in\mathcal{C}(g)$,
(ii) the contract validator is sound, such that
every accepted instruction belongs to
$\mathcal{C}(g)$, and (iii) unsuccessful updates
are rolled back to $p_t$.
Then
\begin{equation}
p_t\in\mathcal{C}(g),
\quad \forall t\in\{0,\ldots,T\}.
\end{equation}
\end{proposition}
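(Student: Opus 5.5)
The plan is a straightforward induction on $t$, using the definition of the update $p_{t+1}=\mathcal{G}_{g,\mathcal{K}_t}(\widetilde{p}_{t+1},p_t)$. The induction hypothesis is $p_t\in\mathcal{C}(g)$. The base case $t=0$ is exactly assumption (i).

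For the inductive step, I will fix $t<T$ with $p_t\in\mathcal{C}(g)$ and split on the three possible outcomes of the garment-constrained validation operator $\mathcal{G}_{g,\mathcal{K}_t}$ described in Section~\ref{sec:cast}.
\begin{itemize}
\item[(a)] \emph{Direct acceptance.} The candidate $\widetilde{p}_{t+1}$ passes the contract check and $p_{t+1}=\widetilde{p}_{t+1}$.
\item[(b)] \emph{Repair.} A repaired instruction passes the check and becomes $p_{t+1}$.
\item[(c)] \emph{Rollback.} Neither passes, so $p_{t+1}=p_t$.
\end{itemize}
In cases (a) and (b), $p_{t+1}$ is an accepted instruction, so soundness (ii) gives $p_{t+1}\in\mathcal{C}(g)$. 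In case (c), assumption (iii) gives $p_{t+1}=p_t$, which lies in $\mathcal{C}(g)$ by the induction hypothesis.

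Two bookkeeping points then close the argument. First, iterations where the loop terminates early (Algorithm~\ref{alg:garmentsegagent}, acceptance or $t=T-1$) produce no new instruction. For those indices I will either set $p_{t+1}=p_t$ by convention or restrict the claim to the instructions actually generated; both readings are covered by the same reasoning as case (c). Second, the three cases must be exhaustive. This follows from the operator's definition: every candidate is accepted directly, repaired and accepted, or rolled back.

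The only real obstacle is interpretive rather than technical. I need to make sure that ``accepted instruction'' in (ii) includes repaired outputs, not only direct candidates. Otherwise a repaired but unchecked instruction could slip outside $\mathcal{C}(g)$. I will handle this by reading the statement preceding the proposition (``accepts a direct or repaired update only when it passes the contract check'') as part of the hypotheses, so that every non-rollback output is validated. Note also that the update-locality property of Proposition~\ref{prop:update_locality} is not needed here: feasibility does not depend on which fields changed.
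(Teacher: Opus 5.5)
Your proposal is correct and matches the paper's proof: induction with base case (i), soundness (ii) covering both direct and repaired acceptances, and rollback (iii) plus the inductive hypothesis covering the remaining case. Your two extra remarks are sound clarifications that the paper leaves implicit: the algorithm never actually generates $p_T$ (or later instructions after early termination), and ``accepted'' must be read as including validated repaired outputs.
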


\begin{proof}
The claim holds at $t=0$ by assumption.
Suppose $p_t\in\mathcal{C}(g)$.
If the candidate or its repaired version passes
sound validation, the accepted instruction belongs
to $\mathcal{C}(g)$. Otherwise, rollback gives
$p_{t+1}=p_t\in\mathcal{C}(g)$.
The result follows by induction.
\end{proof}

\noindent\textbf{Scope of the Guarantees.}
These propositions characterize instruction-level
properties rather than segmentation accuracy.
Selective updates restrict which textual variables
may change, but do not bound the resulting mask
variation. Similarly, contract preservation does
not guarantee that a revised instruction improves
IoU. In practice, language-model-based validation
may also be imperfect, making the feasibility
guarantee conditional on validator soundness.
We therefore complement these properties with
empirical analyses of refinement gains, degradation
rates, and validation outcomes.

\end{document}